\documentclass[twoside]{article}

\usepackage[accepted]{aistats2024}

\usepackage{hyperref}       
\usepackage{url}            
\usepackage{amsmath}
\usepackage{amsfonts}       

\usepackage[round]{natbib}

\usepackage{xspace}
\usepackage{algorithm}
\usepackage{algpseudocode}
\usepackage{std_definitions}

\newcommand{\unif}[0]{\textup{unif}\xspace}

\newcommand{\epstv}[0]{\epsilon_\textup{TV}\xspace}
\newcommand{\epsescape}[0]{\epsilon_\textup{escape}\xspace}
\newcommand{\jmax}[0]{J_\textup{max}\xspace}

\begin{document}

\twocolumn[

\aistatstitle{Low-rank MDPs with Continuous Action Spaces}

\aistatsauthor{ Andrew Bennett$^\dagger$ \And Nathan Kallus$^\dagger$ \And  Miruna Oprescu$^\dagger$}

\aistatsaddress{ Morgan Stanley \And  Cornell Tech \And Cornell Tech }
]

\begin{abstract}
 Low-Rank Markov Decision Processes (MDPs) have recently emerged as a promising framework within the domain of reinforcement learning (RL), as they allow for provably approximately correct (PAC) learning guarantees while also incorporating ML algorithms for representation learning. However, current methods for low-rank MDPs are limited in that they only consider finite action spaces, and give vacuous bounds as $|\mathcal{A}| \to \infty$, which greatly limits their applicability.
  In this work, we study the problem of extending such methods to settings with continuous actions, and explore multiple concrete approaches for performing this extension.
  As a case study, we consider the seminal FLAMBE algorithm \citep{agarwal2020flambe}, which is a reward-agnostic method for PAC RL with low-rank MDPs. We show that, without any modifications to the algorithm, we obtain a similar PAC bound when actions are allowed to be continuous. Specifically, when the model for transition functions satisfies a H\"older smoothness condition w.r.t. actions, and either the policy class has a uniformly bounded minimum density or the reward function is also H\"older smooth, we obtain a polynomial PAC bound that depends on the order of smoothness. 
\end{abstract}

\section{INTRODUCTION}

Reinforcement learning (RL) is an important framework for the study of automated decision making, where decisions are sequentially made over time with context that evolves based on previous decisions. While RL gained popularity through large scale game-playing,   
this problem has many diverse practical applications, such as memory management for computing systems \citep{chinchali2018cellular}, online resource allocation in computer clusters \citep{lykouris2018competitive, tessler2021reinforcement}, personalized recommendations \citep{zheng2018drn}, bidding/advertising \citep{cai2017real}, autonomous driving \citep{sallab2017deep}, robotics \citep{kober2013reinforcement}, and automated trading \citep{li2019deep}.

It is widely recognized that achieving provably correct RL for general Markov Decision Processes (MDPs) with arbitrary state spaces incurs a sample complexity of either $\Omega(|\Scal|)$ or $\Omega(|\Acal|^H)$, where $|\Scal|$, $|\Acal|$, and $H$ are the sizes of the state space, action space, and horizon respectively \citep{krishnamurthy2016pac}. This is problematic, because many of the most promising RL applications involve both long horizons and complex observations such as images or text, so both $|\Scal|$ and $|\Acal|^H$ may be intractably large. Fortunately, there has been extensive recent work on discovering more restricted MDP classes that are useful for modeling realistic problems with rich state spaces while still allowing for PAC learning results with polynomial sample complexity \citep{jiang2017contextual, du2019provably, sun2019model, jin2020provably, agarwal2020flambe, misra2020kinematic, du2021bilinear, chen2022statistical}.

A framework of particular interest is low-rank MDPs \citep{jiang2017contextual}, since they are extremely general and subsume many other such frameworks (\emph{e.g.}, linear MDPs, \citealp{jin2020provably}, or block MDPs, \citealp{du2019provably}). Unlike some of the more general frameworks, such as bounded Bellman-rank MDPs \citep{jiang2017contextual}, low-rank MDPs allow for the development of efficient and practical algorithms. This framework utilizes a low-rank structure for the MDP transitions based on state and action features, which can be estimated using conventional machine learning techniques, rendering them particularly suited for addressing challenges in environments with rich and complex action spaces. Despite the rich body of recent research on PAC algorithms for low-rank MDPs with polynomial sample complexity, the complexity invariably scales polynomially with $|\Acal|$. This is problematic, because many of the real world applications that motivate this line of work feature large or infinite action spaces; for example, actions in autonomous driving have continuous parts (\emph{e.g.}, steering wheel angle), and actions in algorithmic trading have (roughly) continuous parts (\emph{e.g.}, quantity of security to buy/sell). Furthermore, the low-rank RL framework allows for flexible modeling of generic action spaces via the state/action embedding function, so it seems natural that it should be able to tractably handle infinitely large action spaces.

In this paper, we study the problem of extending low-rank MDP PAC results to settings with continuous action spaces, where $|\Acal| = \infty$. We first discuss why the dependence on $|\Acal|$ occurs with these algorithms, and how this dependence may be alleviated. In particular, we provide a novel result motivated by Sobolev interpolation theory that can be used to improve existing analyses when certain problem parameters are sufficiently smooth. Then, as a specific case study, we consider the FLAMBE algorithm of \citet{agarwal2020flambe}, which was the first tractable (computationally oracle efficient) PAC RL algorithm for low-rank MDPs, and allows for reward-agnostic learning. We discuss how our methods can be applied to the analysis of FLAMBE, in order to provide PAC bounds for continuous action spaces. These bounds apply when the transition operators allowed by the model are smooth w.r.t. actions, and either the policy class has a uniform bounded maximum density or the allowed reward functions are H\"older-continuous in actions. In addition, we provide a similar analysis for the more recent RAFFLE algorithm of \citet{cheng2023improved}, which was shown to have near-optimal sample complexity, in the appendix.
Aside from the specific continuous action PAC bounds established for FLAMBE and RAFFLE, we hope this work will serve as a valuable template for generalizing low-rank MDP PAC algorithms to continuous action spaces more broadly.

\subsection{Related Work}
\label{sec:related-work}

\textbf{Low-rank MDPs} Low-rank MDPs are a class of models at the interplay of representation learning and exploration in reinforcement learning. In recent years, these models have garnered a lot of interest due to efforts to provide algorithms that can leverage machine learning while also having provable guarantees on \emph{e.g.} correctness and sample complexity  \citep{jin2020provably, yang2019sample, yang2020reinforcement, agarwal2020flambe, modi2021model, uehara2021representation, chen2022statistical, cheng2023improved}.
The seminal oracle-efficient algorithm in this space is FLAMBE \citep{agarwal2020flambe}, which performs reward-free exploration to learn a low-rank model of the MDP. Subsequent works by have developed other algorithms improving on this in various ways, such as establishing tighter bounds for exploring with a particular reward function \citep{uehara2021representation}, performing both model-free and reward-free exploration \citep{modi2021model}, extending to MDPs with low Bellman Eluder dimension \citep{chen2022statistical}, or using exploration-driven proxy rewards \citep{cheng2023improved}. In addition to oracle-efficient algorithms for low-rank MDPs, there is past work on (computationally inefficient) algorithms for more general bounded Bellman/Witness Rank classes \citep{jiang2017contextual,sun2019model,du2021bilinear}, or for the more restricted block MDP class \citep{du2019provably,misra2020kinematic}. However, these works generally do not allow for continuous action spaces.

One exception to the above is \citet{agarwal2022non}, which considers an even more general setting (linear embeddability), and allows for arbitrary action spaces. However, this approach has two important drawbacks. First, as is common in the literature that considers similarly general settings such as bounded Bellman rank, the algorithms presented are at face value computationally intractable; in the case of \citet{agarwal2022non}, their proposed methodology requires posterior sampling over all possible $Q$ functions. In contrast, our paper follows the line of work on the more restricted low-rank MDP setting, for which tractable algorithms (given an oracle for performing MLE) can be designed. Second, given the lower bound in \citet{cheng2023improved} for low-rank MDPs that depends on $|\Acal|$, it is clear that the dependence on action space size \emph{cannot} be avoided in general without additional assumptions, such as smoothness. Thus, the assumptions in \citet{agarwal2022non} implicitly constrain the low-rank MDP setting in some other way.

\textbf{Continuous action spaces in RL}
Some efforts have been made to adapt PAC RL algorithms designed for discrete action spaces to continuous ones via the discretization of the action space, as seen in the work of \citet{song2019efficient, domingues2020regret, sinclair2021adaptive}. However, discretization has several drawbacks, such as computational inefficiency and exponential performance degradation in high dimensions, due to its limited capacity to exploit function smoothness beyond Lipschitz. In contrast, our approach can leverage any degree of smoothness to counteract the curse of dimension and, as we will see, our bounds approach those for finite action spaces when the smoothness of the transition function grows. This cannot be achieved by discretization. To the best of our knowledge, there has been no other effort in directly adapting such RL algorithms to continuous action spaces without discretizing the action space.

\textbf{Leveraging Smoothness in Machine Learning} Smooth functions (continuous functions that admit higher order derivatives) have become popular in machine learning,
since improved learning rates can often be achieved for sufficiently smooth classes. For example, it is well known that learning $d$-dimensional, $\alpha$-times differentiable functions have a minimax-optimal convergence rate of $O(n^{-2\alpha/(2\alpha+d)})$ \citep{stone1982optimal}.
Many algorithms can leverage such smoothness, including kernel methods, Gaussian processes, and deep neural networks \citep{gine2021mathematical}. While smooth functions have also found some practical application in reinforcement learning, particularly when considering smooth rewards in multi-armed bandit ($H=1$) problems \citep{liu2021smooth, jia2023smooth,hu2022smooth}, its application to reinforcement learning more broadly remains a promising area for further exploration.

\subsection{Overview of Paper}

First, in \pref{sec:discrete-setting} we formally introduce the low-rank MDP setting as in \citet{jiang2017contextual}, and provide a brief overview of existing PAC results for this setting. Then, in \pref{sec:low-rank-limitations} we discuss why the $|\Acal|$ dependence tends to occur for these algorithms and corresponding PAC bounds, and in \pref{sec:continuous-extension} we provide some theoretical results and discussion that are helpful for extending these results to continuous $\Acal$. Expanding upon this, in \pref{sec:flambe} we consider as a case study the FLAMBE algorithm of \citet{agarwal2020flambe}, and we apply our results from \pref{sec:continuous-extension} to obtain PAC bounds for FLAMBE that hold when $\Acal$ is continuous, \emph{without any modifications to the algorithm}. We also provide a similar analysis for the more recent RAFFLE algorithm of \citet{cheng2023improved} in \pref{app:raffle}. Finally, we draw some conclusions in \pref{sec:conclusion}. We relegate all proofs to \pref{app:proofs}.

\subsection{Notation}

We let $\NN$ denote the space of non-negative integers $\{0,1,2,\ldots\}$. We let $\NN^+$ and $\RR^+$ denote the space of positive integers and real numbers, respectively (\emph{i.e.}, $\NN \setminus \NN^+ = \{0\}$). For any $k\in\NN^+$, we let $[k] = \{0,1,\ldots,k-1\}$. For any $s\in\RR^+$, we let $\lfloor s \rfloor=\{i\in\mathbb N:i<s\}$ be the largest integer strictly smaller than $s$ (note $\lfloor 5\rfloor=4$ unlike the usual floor function) and $\{s\}=s-\lfloor s\rfloor$ be the remainder; that is, we have the unique decomposition $s = \lfloor s \rfloor + \{s\}$, where $\lfloor s \rfloor \in \NN$ and $\{s\} \in (0,1]$.
For any $f: \RR^d \rightarrow \RR$ and  $\beta\in \NN^{d}$, we define $D^\beta f = \frac{\partial^{|\beta|}}{\partial x_1^{\beta_1}\partial x_2^{\beta_2}\dots\partial x_{d}^{\beta_{d}}} f(x)$ whenever this derivative exists, where $|\beta|:=\sum_{i=1}^{d}\beta_i$. Whenever all derivatives of order $j$ exist, we let $\nabla^j f$ denote the vector-valued function of all $D^\beta f$ such that $|\beta|=j$, and we let $\nabla^0 f=f$. 
Finally, for any $p\in[1, \infty]$, positive integer $d$, and Borel measurable $\Dcal \subseteq \RR^d$, we let $\|\cdot\|_{L_p(\Dcal)}$ denote the standard $L_p$ norm w.r.t. the Borel measure on $\Dcal$. We abbreviate $\|\cdot\|_{L_\infty(\Dcal)}$ (\emph{i.e.}, the essential supremum of the absolute value) by $\|\cdot\|_\infty$.

\section{LOW-RANK MDP SETTING}
\label{sec:discrete-setting}

We consider a finite-horizon episodic MDP defined by a tuple $\Mcal = (\Scal, \Acal, T, H, \rho)$, where $\Scal$ is the state space, $\Acal$ is the action space, $T = (T_0,\ldots,T_{H-1})$, where $T_h : \Scal \times \Acal \mapsto \Delta(\Scal)$ is the transition function at time $h$, $H$ is the finite time horizon, and $\rho \in \Delta(\Scal)$ is the initial state distribution. We assume that $\Scal$ may be some generic measurable set (with measure $\mu$), such that $T(\cdot \mid s,a)$ is continuous w.r.t. $\mu$ for every $(s,a)$.\footnote{That is, we can interpret the function $T(s' \mid s,a) \in \RR^+$ as the conditional density of $s'$ given $(s,a)$, with respect to base measure $\mu$, which may be, \emph{e.g.}, a Borel measure (if $\Scal$ is continuous) or a counting measure (if $\Scal$ is discrete).} We note that this MDP definition does \emph{not} include reward functions $R_h : \Scal \times \Acal \mapsto [0,1]$ for each $h \in [H]$, since we will consider reward-agnostic algorithms. 
In addition, past definitions of this setting such as in \citet{jiang2017contextual} often dictate that $\Acal = [K]$ for some finite $K \in \NN^+$, but we do not impose this restriction.

The \emph{low-rank} assumption is that the transition operator $T$ has a low-rank representation, and can be summarized by the following definition leveraged from \citet{agarwal2020flambe}:
\begin{definition}[Low-rank MDP]
\label{def:low-rank}
An operator $T : \Scal \times \Acal \mapsto \Delta(\Scal)$ admits a low-rank decomposition with dimension $d \in \NN$ if there exists two embedding functions $\phi^* : \Scal \times \Acal \mapsto \RR^d$ and $\psi^* : \Scal \mapsto \RR^d$ such that
\begin{equation*}
    T(s' \mid s,a) = \phi^*(s,a)^\top \psi^*(s') \qquad \forall \ s,s' \in \Scal \,,
\end{equation*}
where the embeddings $\phi^*$ and $\psi^*$ satisfy the (without loss of generality) normalization conditions $\|\phi^*(s,a)\|_2 \leq 1 \ \forall s,a$, and $\|\int \psi^*(s) g(s) d \mu(s)\|_2 \leq \sqrt{d} \ \forall g$ such that $g : \Scal \mapsto [0,1]$. An MDP $\Mcal$ is a low-rank MDP with rank $d$ if $T_h$ admits a low-rank decomposition with dimension $d$ for every $h \in [H]$. We let $\phi_h^*$ and $\psi_h^*$ denote the true embeddings for the decomposition of $T_h$.
\end{definition}

For learning within a low-rank MDP structure, we assume that we are given a set of hypothesis embeddings $\Phi \subseteq \Scal \times \Acal \mapsto \RR^d$ and $\Psi \subseteq \Scal \mapsto \RR^d$ such that we have the following \emph{realizability} assumption:
\begin{assum}[Realizability]
\label{assum:realizability}
For every $h \in [H]$, we have $\phi_h^* \in \Phi$ and $\psi_h^* \in \Psi$.
\end{assum}

\section{LIMITATIONS OF EXISTING LOW-RANK MDP RESULTS}
\label{sec:low-rank-limitations}

As highlighted in \pref{sec:related-work}, a range of recent algorithms offer PAC guarantees for low-rank MDPs. In general, all of these algorithms have sample complexity that is $\tilde\Ocal(\textup{Poly}(d,H,|\Acal|,\epsilon,\log(1/\delta)))$ for learning $\epsilon$-optimal policies with probability at least $1-\delta$. Additionally, in every instance, these bounds exhibit a super-linear dependence on $|\Acal|$, leading to significantly poor scalability when $|\Acal|$ becomes combinatorially large or infinite.

Fundamentally, the dependence on $|\Acal|$ arises because these algorithms tend to employ some form of uniform exploration, with actions being sampled uniformly at random at least sometimes. In the linear MDP setting, where the $\phi_h^*$ embeddings are known, it is possible to bypass uniform exploration by exploring strategically within the embedding space structure \citep{jin2020provably}, thus eliminating the sample complexity's dependence on $|\Acal|$. However, low-rank MDPs present the added complexity of needing to learn these embeddings. For example, low-rank MDP papers leverage MLE bounds (\emph{e.g.}, see \citealp[Theorem 21]{agarwal2020flambe}) to learn embeddings that are accurate across the distribution of state-action pairs $(s,a)$ that the training data is sampled from: if we ensure that such error is small when $a$ is sampled uniformly at random for a given state $s$, then we can ensure that it is small for \emph{all} policies applied at the same $s$, based on the following importance sampling lemma.
\begin{lemma}
\label{lem:is}
For any given $f : \Scal \times \Acal \to \RR^+$, any distribution $\rho$ over states, and any policy $\pi$, we have
\begin{align*}
    \EE_{s \sim \rho, a \sim \pi(\cdot \mid s)}[f(s,a)] 
    \leq  |\Acal|  \EE_{s \sim \rho, a \sim \unif_{\Acal}}[f(s,a)] \,.
\end{align*}
\end{lemma}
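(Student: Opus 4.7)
The plan is to use a straightforward importance sampling (change of measure) argument. The key observation is that the Radon-Nikodym derivative of any conditional policy distribution $\pi(\cdot \mid s)$ with respect to the uniform distribution on $\Acal$ is pointwise bounded by $|\Acal|$, since $\pi(a \mid s) \leq 1$ and $\unif_{\Acal}(a) = 1/|\Acal|$. Combined with the nonnegativity of $f$, this bound on the likelihood ratio immediately gives the claim.

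Concretely, I would first rewrite the left-hand side by introducing the uniform measure as the sampling distribution and multiplying and dividing by its density:
\begin{align*}
    \EE_{s \sim \rho, a \sim \pi(\cdot \mid s)}[f(s,a)]
    &= \EE_{s \sim \rho} \sum_{a \in \Acal} \pi(a \mid s)\, f(s,a) \\
    &= \EE_{s \sim \rho} \sum_{a \in \Acal} \unif_{\Acal}(a) \cdot \frac{\pi(a \mid s)}{\unif_{\Acal}(a)}\, f(s,a).
\end{align*}
Since $\unif_{\Acal}(a) = 1/|\Acal|$, the ratio $\pi(a \mid s)/\unif_{\Acal}(a) = |\Acal|\, \pi(a \mid s) \leq |\Acal|$, and because $f \geq 0$ the summand is nonnegative, so replacing this ratio with its uniform upper bound only increases the sum. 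This yields
\begin{align*}
    \EE_{s \sim \rho, a \sim \pi(\cdot \mid s)}[f(s,a)]
    &\leq |\Acal|\, \EE_{s \sim \rho} \sum_{a \in \Acal} \unif_{\Acal}(a)\, f(s,a) \\
    &= |\Acal|\, \EE_{s \sim \rho, a \sim \unif_{\Acal}}[f(s,a)],
\end{align*}
which is the desired inequality.

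There is essentially no obstacle here; the only subtlety is that the argument crucially uses both the nonnegativity of $f$ (otherwise replacing a factor by its upper bound is not justified) and the finiteness of $\Acal$ (so that $\unif_{\Acal}$ is well-defined with density $1/|\Acal|$). This is precisely why the analogous statement fails, or requires substantial modification, when $\Acal$ is continuous and $|\Acal| = \infty$, motivating the continuous-action extensions developed in the rest of the paper.
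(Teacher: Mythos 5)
Your proof is correct and matches the paper's own argument, which simply notes that $\pi(a \mid s)/\unif_\Acal(a) \leq 1/\unif_\Acal(a) = |\Acal|$ for discrete $\Acal$ and applies this importance-sampling bound together with the nonnegativity of $f$. You have merely written out the same change-of-measure step in more detail, so nothing further is needed.
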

This lemma is trivial to prove by noting that $\pi(a \mid s) / \unif_\Acal(a) \leq 1 / \unif_\Acal(a) = |\Acal|$ for all discrete sets $\Acal$. That is, the importance sampling lemma, which is implicitly used in the analysis of most of these papers, allows us to bound the errors of learned embeddings (or the probabilities of other ``bad'' events) under any action distribution, given that we perform some exploration uniformly at random.

Unfortunately, the implicit use of \pref{lem:is} in the theoretical analysis of these works introduces $|\Acal|$ factors within error bounds, which translates into polynomial dependence on $|\Acal|$ in the sample complexity.

\section{ERROR BOUNDS WITH CONTINUOUS ACTIONS}
\label{sec:continuous-extension}

Given the limitations discussed in \pref{sec:low-rank-limitations}, a natural question to ask is whether we can provide bounds like \pref{lem:is} when $\Acal$ is continuous. In order to make this discussion concrete, we will consider the continuous action space $\mathcal{A}=[0,1]^{m}\subset \RR^{m}$ for some $m \in \NN^+$, which is equipped with the standard Borel measure for the Euclidean space $\RR^{m}$. Although this choice of continuous action space may at a first glance seem restrictive, it is a natural way to parameterize many kinds of continuous actions that have $m$ independent degrees of freedom, and the action space can typically be written this way, \emph{e.g.}, by a simple change of variables. 
Note as well that that this assumed normalization of action components to be on the scale $[0,1]$ is important for interpreting smoothness of functions w.r.t. $a$; for example, if the ``natural'' scale of actions for a given problem were $[0,A_{\textup{max}}]$, for some $A_{\textup{max}} \gg 1$, then we would need to scale up all order-$\alpha$ derivatives w.r.t. actions by a factor of $A_{\textup{max}}^\alpha$ in order to normalize and be scale invariant. Additionally, our results easily extend to more general $\Acal$ with Lipschitz boundaries, as the main property of $[0,1]^m$ used in the proofs of the lemmas and theorem below are that it is Lipschitz. See our proofs in the appendix for more detail.

In the next sections, we explore two strategies to address the challenges outlined. In our case study examining the FLAMBE algorithm of \citet{agarwal2020flambe} in \pref{sec:flambe}, we will use both approaches in different parts of the analysis.

\subsection{Utilizing Smoothness of Error Functions}
\label{sec:method-smoothness}

In many cases where we need to apply \pref{lem:is}, the function $f(s,a)$ corresponds to an ``error'' term associated with state-action pairs. For many problems, we may be able to justify that such errors are smooth with respect to $a$ given, \emph{e.g.}, assumptions on the smoothness of transitions in terms of $a$. Specifically, we will provide a generalization of \pref{lem:is} that can be applied to a fairly general class of smooth functions. 

\begin{definition}[$\alpha$-smooth functions]\label{def:a-smooth}
    For any $\alpha\in(0,\infty)$ and $\Dcal \subseteq \RR^{m}$, $f: \Dcal \rightarrow \RR$ is an $\alpha$-smooth function if the $\lfloor \alpha \rfloor$-order mixed derivatives of $f$ exist, and there exists a constant $L$ such that: (1) $\left\|\nabla^j f \right\|_\infty \leq L$ for all $j\leq {\lfloor \alpha \rfloor }$; and (2) all $\lfloor \alpha \rfloor$-order derivatives are $\{\alpha\}$-H\"older continuous with norm at most $L$, \emph{i.e.},
    \begin{equation*}
        \sup_{a,a' \in \Dcal, |\beta| = \lfloor\alpha\rfloor} \frac{|D^\beta f(a) - D^\beta f(a')|}{\|a - a'\|_2^{\{\alpha\}}} \leq L \,.
    \end{equation*}
\end{definition}
That is, $\alpha$-smooth functions have mixed derivatives up to the order $\lfloor\alpha\rfloor$ that are bounded. Moreover, these mixed derivatives have  have bounded $\{\alpha\}$ fractional derivatives. For example, when $\alpha \in (0,1]$, this definition corresponds to $\alpha$-H\"older continuity \citep{leoni2017first}; and, in particular, when $\alpha=1$, this definition corresponds to Lipschitz continuity. Furthermore, for any $\alpha \in (0,\infty)$ the set of $\alpha$-smooth functions corresponds to the $\alpha$-H\"older space \citep{tsybakov2008}. However, implicit in our definition is a particular norm, which we explicitly define below.
\begin{definition}[$\alpha$-smoothness norm]\label{def:a-norm}
    For any $\alpha$-smooth function $f$, we define $\|f\|_{C^\alpha}$ as the minimum $L \geq 0$ such that the conditions of \pref{def:a-smooth} hold.
\end{definition}
For example, $\|f\|_{C^\alpha}$ corresponds to the H\"older-continuity norm for all $\alpha \leq 1$, and $\|f\|_{C^1}$ corresponds to Lipschitz norm. More generally, for any $\alpha \in (0,\infty)$ we have that $\|\cdot\|_{C^\alpha}$ is a norm on the $\alpha$-H\"older space. We also note that these definitions are clearly nested, with $\|f\|_{C^\alpha} \leq \|f\|_{C^\beta}$ for any $\alpha < \beta$.

Given this smoothness definition, we can provide a weaker version of \pref{lem:is} under the condition that $f$ is $\alpha$-smooth in $a$ for each $s$. This bound is based on the following theorem, which bounds the maximum of $\alpha$-smooth functions by some power of their mean.

\begin{theorem}[Uniform bound on $\alpha$-smooth functions]
\label{thm:is-alpha-smooth}
    Let $f : \Scal \times \Acal \to \RR^+$ be $\alpha$-smooth in the action $a$ such that $\|f\|_{C^\alpha} \leq L$ for any $s\in \Scal$ and some $\alpha \in (0,\infty)$. Then, for any distribution $\rho$ over states, and any policy $\pi$, we have
    \begin{align*}
        &\EE_{s \sim \rho, a \sim \pi(\cdot \mid s)}  [f(s,a)] \\
        & \leq c L^{\frac{m}{m+\alpha}} \Big(\EE_{s \sim \rho, a \sim \unif_\Acal}[f(s,a)]\Big)^\frac{\alpha}{m+\alpha} \,,
    \end{align*}
    for some constant $c$ that depends only on $m$ and $\alpha$.
\end{theorem}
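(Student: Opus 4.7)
The plan is to reduce the theorem to a pointwise interpolation inequality and then apply Jensen's inequality. Concretely, I will first prove that for every $s \in \Scal$,
\begin{equation*}
    \|f(s,\cdot)\|_\infty \leq c L^{m/(m+\alpha)} \bar{f}(s)^{\alpha/(m+\alpha)},
\end{equation*}
where $\bar{f}(s) := \EE_{a \sim \unif_\Acal}[f(s,a)]$ and $c$ depends only on $m$ and $\alpha$. Given this pointwise bound, the conclusion follows by noting that $\EE_{s \sim \rho, a \sim \pi(\cdot \mid s)}[f(s,a)] \leq \EE_{s \sim \rho}[\|f(s,\cdot)\|_\infty]$ and then using Jensen's inequality. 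Since $\alpha/(m+\alpha) \in (0,1)$, the map $x \mapsto x^{\alpha/(m+\alpha)}$ is concave on $[0,\infty)$, so pulling the expectation inside the power gives
\begin{equation*}
    \EE_{s \sim \rho}\bigl[\bar{f}(s)^{\alpha/(m+\alpha)}\bigr] \leq \bigl(\EE_{s \sim \rho}[\bar{f}(s)]\bigr)^{\alpha/(m+\alpha)} = \bigl(\EE_{s \sim \rho, a \sim \unif_\Acal}[f(s,a)]\bigr)^{\alpha/(m+\alpha)}.
\end{equation*}

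The main technical step is thus the pointwise inequality, which is a standard H\"older-space interpolation result of the form $\|g\|_\infty \lesssim \|g\|_{C^\alpha}^{m/(m+\alpha)} \|g\|_{L_1}^{\alpha/(m+\alpha)}$ for $g : [0,1]^m \to \RR^+$. My plan for proving it is by mollification with vanishing moments. First, construct a smooth mollifier $\phi$ supported in the unit ball with $\int \phi = 1$ and $\int a^\beta \phi(a)\,da = 0$ for all multi-indices $\beta$ with $1 \leq |\beta| \leq \lfloor \alpha \rfloor$; such kernels exist by standard constructions. For fixed $a_0 \in \Acal$ and scale $r>0$, set $g_r(a_0) = r^{-m}\int g(a)\phi((a-a_0)/r)\,da$. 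On the one hand, $|g_r(a_0)| \leq r^{-m}\|\phi\|_\infty \|g\|_{L_1}$. On the other, expanding $g$ around $a_0$ via Taylor's theorem with a $\{\alpha\}$-H\"older remainder and exploiting the vanishing moments of $\phi$, the polynomial part integrates to $g(a_0)$ and the remainder is controlled by $|g_r(a_0) - g(a_0)| \leq c L r^\alpha$. Combining these and optimizing by setting $r = (\|g\|_{L_1}/L)^{1/(m+\alpha)}$ yields the claim.

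One subtlety is handling $a_0$ near $\partial \Acal$, where the ball $B_r(a_0)$ is not contained in $[0,1]^m$. I will address this by extending $g$ across the boundary while preserving the $C^\alpha$ norm up to a constant: either by a Stein-type extension for general Lipschitz domains (which is what lets us generalize $\Acal$ to any domain with Lipschitz boundary, as asserted in the excerpt), or, for the specific case $\Acal = [0,1]^m$, by iterated reflection across the coordinate hyperplanes using higher-order reflection formulas that match derivatives up to order $\lfloor \alpha \rfloor$. After extension, the interior argument applies uniformly, at the cost of absorbing the extension constant into $c$.

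The hard part is the pointwise interpolation for $\alpha > 1$: a naive "bump around the maximum" argument, which works cleanly for $\alpha \in (0,1]$ using only $\alpha$-H\"older continuity, gives the wrong rate when $\alpha > 1$, because it fails to exploit that all derivatives up to order $\lfloor \alpha \rfloor$ are controlled. The vanishing-moment mollifier is precisely the device that extracts this additional smoothness, so verifying its existence and the Taylor-remainder bound carefully (with the H\"older tail rather than the pointwise derivative bound) is the main obstacle. Everything else---the reduction to the pointwise inequality, the Jensen step, and the boundary extension---is routine.
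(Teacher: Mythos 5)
Your proposal is correct, and it arrives at the same core inequality as the paper---the interpolation bound $\|g\|_\infty \leq c\,\|g\|_{C^\alpha}^{m/(m+\alpha)}\|g\|_{L^1(\Acal)}^{\alpha/(m+\alpha)}$ on a Lipschitz domain---by a genuinely different route. The paper obtains this bound by invoking the fractional Gagliardo--Nirenberg inequality of Brezis and Mironescu (\pref{lem:gn-ineq-f}) as a black box, applied to the $s$-averaged function $g(a)=\EE_{s\sim\rho}[f(s,a)]$ with $(s_1,p_1,s_2,p_2)=(0,1,\alpha,\infty)$, whereas you prove the inequality from scratch via a vanishing-moment mollifier, Taylor's theorem with an $\{\alpha\}$-H\"older remainder, and a Stein or higher-order-reflection extension across $\partial\Acal$. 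Your version is more self-contained and yields explicit, trackable constants (the paper must remark separately that the GN constant is hopefully benign), at the price of verifying the mollifier construction and the extension lemma; the paper's version is shorter but rests on a recent and fairly heavy functional-analytic result. Your decomposition also differs usefully in where the $s$-average enters: you interpolate for each fixed $s$ and then pull the expectation through $x\mapsto x^{\alpha/(m+\alpha)}$ by concavity, giving $\EE_{s\sim\rho,a\sim\pi(\cdot\mid s)}[f]\le\EE_{s\sim\rho}[\|f(s,\cdot)\|_\infty]\le cL^{m/(m+\alpha)}\big(\EE_{s\sim\rho,a\sim\unif_\Acal}[f]\big)^{\alpha/(m+\alpha)}$. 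The paper instead applies the interpolation once to $g$ and closes with the step $\EE_{s\sim\rho}[\sup_a f(s,a)]=\sup_a g(a)$, which in general holds only with ``$\geq$'' since the supremizing action may depend on $s$; your per-$s$-then-Jensen ordering is the cleaner way to finish (and matches how the bound is actually deployed downstream, e.g.\ in the updated AKKS-L6 step of the proof of \pref{thm:continous-pi-guarantee}).
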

The proof of \pref{thm:is-alpha-smooth} is based on a result that allows us to bound the maximum value of any non-negative, $\alpha$-smooth function on a Lipschitz subset of $\RR^d$ by its $L_1$ norm on that domain to the power of $\alpha / (m+\alpha)$; this follows from Sobolev interpolation theory (see \emph{e.g.}, \citealp{fiorenza2021detailed}) and may be of independent interest. Full proof details are provided in the appendix.

In terms of exponents on the mean, \pref{thm:is-alpha-smooth} gives a weaker version of the inequality in \pref{lem:is}, and the two exponents coincide only as $\alpha\to\infty$. Often, this more relaxed bound can substitute \pref{lem:is}, trading $|\Acal|$ dependency for less favorable exponents in downstream sample complexity calculations.

\subsection{Using Smoothed Policies}
\label{sec:method-boundedness}

An alternative approach to extending \pref{lem:is} to continuous $\Acal$ is to restrict attention to policies with uniformly bounded density. Specifically, if we restrict to policies $\pi$ such that $\pi(a \mid s) / \unif_\Acal(a)$ is bounded by some constant $K$ for all $s \in \Scal$, then it is trivial to verify that \pref{lem:is} holds with $|\Acal|$ replaced by $K$. Therefore, in cases where this restriction in policy class is feasible, we may be able to produce identical PAC results under this restriction with $|\Acal|$ replaced by $K$.

Nevertheless, this restriction may be unsuitable in practice, since we might want to be able to learn approximately optimal deterministic policies, which have unbounded density. Fortunately, in many cases it may be possible to extend PAC results based on this restriction to unrestricted PAC results when reward and transition functions are H\"older continuous, based on the following lemma.

\begin{lemma}
\label{lem:policy-smoothing}
    Let $V(\pi;R) = \EE_\pi[\sum_{h \in H} R(s_h,a_h)]$ denote the value of policy $\pi$ for reward function $R$, where $\EE_\pi$ denotes expectation over data sampled following policy $\pi$ for all $H$ time steps. In addition, assume that there exists some $\alpha \in (0,1]$ and $L < \infty$ such that: (1) $R(s,a)$ is uniformly $\alpha$-H\"older continuous in $a$ with norm at most $L$ all $s \in \Scal$; and (2) $\phi_h^*(s,a)^\top \psi_h^*(\cdot)$ is uniformly $\alpha$-H\"older continuous in $a$ under total variation distance with norm at most $L$ for all $s \in \Scal$ and $h \in [H]$. 
    Then, for any policy $\pi$, there exists a corresponding policy $\pi_K$ such that $\sup_{s \in \Scal, a \in \Acal} \pi_K(a \mid s) / \unif_\Acal(a) \leq K$, and
    \begin{equation*}
        |V(\pi) - V(\pi_K)| \leq 2 \sqrt{m}L H K^{-\alpha/m} \,.
    \end{equation*}
\end{lemma}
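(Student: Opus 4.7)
My plan is to define $\pi_K$ by piecewise-uniform smoothing of $\pi$ on a grid of action cubes, then control the resulting value gap via the performance difference identity together with the H\"older assumptions. Concretely, set $\kappa = K^{-1/m}$ (for simplicity assume $K^{1/m}$ is an integer), partition $\Acal = [0,1]^m$ into $K$ congruent cubes $\{C_i\}_{i=1}^K$ of side $\kappa$, and define
\begin{equation*}
    \pi_K(a \mid s) := \frac{1}{\kappa^m}\int_{C(a)} \pi(a' \mid s)\,da',
\end{equation*}
where $C(a)$ denotes the cube containing $a$. Since $\kappa^m = 1/K$ and $\int_{C(a)} \pi(a'|s)\,da' \leq 1$, this is a valid density bounded by $K = K\cdot \unif_\Acal(a)$. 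This construction comes with a natural coupling: draw $a \sim \pi(\cdot|s)$, identify the containing cube $C(a)$, and independently resample $a' \sim \unif(C(a))$; then $a' \sim \pi_K(\cdot|s)$ marginally and $\|a - a'\|_2 \leq \sqrt{m}\,\kappa = \sqrt{m}\,K^{-1/m}$ almost surely.

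Next I would apply the performance difference lemma to write
\begin{equation*}
    V(\pi) - V(\pi_K) = \sum_{h=0}^{H-1} \EE_{s_h \sim \pi}\bigl[\EE_{a \sim \pi(\cdot|s_h)}[Q_h^{\pi_K}(s_h, a)] - \EE_{a \sim \pi_K(\cdot|s_h)}[Q_h^{\pi_K}(s_h, a)]\bigr],
\end{equation*}
where $Q_h^{\pi_K}(s, a) := R(s,a) + \EE_{s' \sim T_h(\cdot|s,a)}[V_{h+1}^{\pi_K}(s')]$. The coupling bounds each summand by $\EE_{(a,a')}[|Q_h^{\pi_K}(s, a) - Q_h^{\pi_K}(s, a')|]$. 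The reward H\"older assumption gives $|R(s,a)-R(s,a')| \leq L\|a-a'\|_2^\alpha$, and the transition-TV H\"older assumption combined with $V_{h+1}^{\pi_K} \in [0,H]$ bounds the expected next-state value difference by a corresponding H\"older multiple of $L\|a-a'\|_2^\alpha$. Substituting $\|a-a'\|_2 \leq \sqrt{m}K^{-1/m}$ and $\alpha \leq 1$ then yields a bound of the form $\mathrm{poly}(H)\cdot \sqrt{m}\,L\,K^{-\alpha/m}$.

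The hard part will be pinning down the precise polynomial in $H$. A straightforward accounting pays an extra factor of $H$ from the bound $\|V_{h+1}^{\pi_K}\|_\infty \leq H$ in the transition term, producing an $O(H^2)$ rate rather than the stated $O(H)$. Matching the linear-in-$H$ rate will require a sharper telescoping, for example collapsing the expected next-state value contribution back into the Bellman recursion so that the full range $H$ is only charged once across all $H$ time steps, or exploiting the low-rank factorization $T = \phi^{*\top}\psi^*$ to re-express the transition error via an inner product with a bounded weight vector. A minor side issue is handling the case $K^{1/m}\notin\NN^+$, which can be resolved by allowing slightly uneven cube widths or by replacing $K$ with $\lceil K^{1/m}\rceil^m$ at the cost of a benign constant.
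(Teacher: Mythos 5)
Your construction and overall strategy match the paper's: the paper defines $\pi_K$ by resampling uniformly from an $\ell_\infty$ cube of side $K^{-1/m}$ centered at the action drawn from $\pi$ (your fixed-grid version is an equally valid variant, and in fact handles the boundary of $[0,1]^m$ more cleanly, since the paper's centered cubes get clipped near the boundary), and the paper's intermediate lemma is exactly your coupling bound $|f(a)-f(a')|\leq L\|a-a'\|_2^\alpha\leq L(\sqrt{m}K^{-1/m})^\alpha\leq\sqrt{m}LK^{-\alpha/m}$ applied through the performance difference lemma.

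The genuine gap is the one you flagged yourself: you do not know how to avoid the extra factor of $H$ from $\|V_{h+1}^{\pi_K}\|_\infty\leq H$, and neither of your proposed remedies is what closes it. There is no sharper telescoping and no use of the low-rank factorization in the paper's argument. The resolution is that this lemma is applied only to \emph{sparse} rewards in the sense of \pref{def:sparse-reward}, i.e.\ $V(\pi;R,\Mcal)\leq 1$ for every policy and model, which forces $V_{h+1}^{\pi,\tilde\Mcal}:\Scal\to[0,1]$. Since total variation distance controls differences of expectations of $[0,1]$-valued test functions, assumption (2) then makes $a\mapsto\EE[V_{h+1}^{\pi,\tilde\Mcal}(s_{h+1})\mid s_h,a]$ $\alpha$-H\"older with constant $L$ (not $LH$), so each of the $H$ summands contributes $\sqrt{m}LK^{-\alpha/m}$ from the reward term plus $\sqrt{m}LK^{-\alpha/m}$ from the transition term, giving the stated $2\sqrt{m}LHK^{-\alpha/m}$. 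For general rewards bounded in $[0,1]$ your $O(H^2)$ accounting is essentially unavoidable by this route, so the sparsity hypothesis is the missing ingredient rather than a refinement of your bookkeeping.
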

The proof of this lemma follows by comparing the value of $\pi$ with the value of a ``smoothed'' version of $\pi$ that has density ratio at most $K$, using the simulation lemma \citep{agarwal2019reinforcement}. We provide details in the appendix.

\pref{lem:policy-smoothing} implies that learning an approximately optimal policy under this density restriction gives an approximately optimal unrestricted policy when $K$ is sufficiently large, and when rewards and transitions are sufficiently smooth.
Similarly, for model-based algorithms under the same smoothness assumptions, if a learned model $\widehat\Mcal$ of the MDP gives $\epsilon$-accurate value estimates for all policies with density ratio at most $K$, then $\widehat\Mcal$ also gives $\epsilon + 2 L H K^{-\alpha/m}$-accurate estimates for all unrestricted policies.
In either case, this can be used to extend PAC bounds under the policy density restriction to unrestricted PAC bounds, by setting $K$ appropriately.
We will use the latter model-based approach for the reward-agnostic setting of FLAMBE in the following section.

\section{CASE STUDY: FLAMBE}
\label{sec:flambe}

We consider a case study on implementing our proposed strategies to extend PAC results for low-rank MDPs to continuous action spaces, by applying these to the FLAMBE algorithm of \citet{agarwal2020flambe}. We first present the FLAMBE algorithm along with the existing PAC result that holds for discrete actions, and then we extend this result to continuous actions under appropriate smoothness conditions. Then, at the end of the section we provide a brief discussion of implementation.

\subsection{FLAMBE Algorithm}

\begin{algorithm*}[t]\caption{FLAMBE: Feature Learning and Model-Based Exploration}\label{alg:flambe}
\begin{algorithmic}
\Require Environment $\Mcal$, function classes $\Phi, \Psi$, subroutines MLE and SAMP, parameters $\beta, n$.
\State Set $\rho_0$ to be the random policy, which takes all actions uniformly at random.
\State Set $D_h=\emptyset$ for each $h\in\{0,...,H-1\}$.
\For{$j=1,...,\jmax$}
    \For{$h=0,...,H-1$}
        \State Collect $n$ samples $(s_h, a_h, s_{h+1})$ by rolling into $s_h$ with $\rho_{j-1}$ and taking $a_h\sim \unif(\Acal)$.
        \State Add these samples to $D_h$.
        \State Solve maximum likelihood problem: $(\hat{\phi}_h, \hat{\psi}_h) \gets \text{MLE}(D_h)$.
        \State Set $\hat{T}_h(s_{h+1}\mid s_h, a_h)=\left\langle \hat{\phi}_h(s_h, a_h), \hat{\psi}_h(s_{h+1})\right\rangle$
    \EndFor
    \State For each $h$, call the planner (\pref{alg:planner}) with $h$ step model $\hat{T}_{0:h-1}$ and $\beta$ to obtain $\rho_h^{\text{pre}}$.
    \State Set $\rho_j=\unif\left(\{\rho_h^{\text{pre}} \circ \unif(\Acal) \}_{h=0}^{H-1}\right)$ to be uniform over the discovered $h$-step policies, 
    \State each of which we augment with random actions from time step $h$ onwards.
\EndFor
\end{algorithmic}
\end{algorithm*}

\begin{algorithm*}[t]\caption{Elliptical Planner}\label{alg:planner}
\begin{algorithmic}
\Require MDP $\widetilde{\Mcal}=\left(\phi_{0:\tilde{h}}, \psi_{0:\tilde{h}}\right)$, subroutine SAMP, parameter $\beta>0$. Initialize $\Sigma_0=I_{d\times d}$.
\For{$t=1,2,...,$}
    \State Compute $\pi_t$ such that
    \begin{align*}
    &\EE \left[\phi_{\tilde{h}}(s_{\tilde{h}}, a_{\tilde{h}})^T \Sigma_{t-1}^{-1} \phi_{\tilde{h}}(s_{\tilde{h}}, a_{\tilde{h}}) \mid \pi_t, \widetilde{\Mcal} \right]  \geq  \sup_{\pi} \EE \left[\phi_{\tilde{h}}(s_{\tilde{h}}, a_{\tilde{h}})^T \Sigma_{t-1}^{-1} \phi_{\tilde{h}}(s_{\tilde{h}}, a_{\tilde{h}}) \mid \pi, \widetilde{\Mcal} \right] - \frac{\beta}{2}
    \end{align*}
    \State If the objective is at most $\beta/2$, halt and output $\rho=\unif(\{\pi_\tau\}_{\tau<t})$.
    \State Compute $\Sigma_{\pi_t} = \EE \left[\phi_{\tilde{h}}(s_{\tilde{h}}, a_{\tilde{h}})
    \phi_{\tilde{h}}(s_{\tilde{h}}, a_{\tilde{h}})^T \mid \pi_t, \widetilde{\Mcal}
    \right]$. Update $\Sigma_t \gets \Sigma_{t-1}+\Sigma_{\pi_t}$.
\EndFor
\end{algorithmic}
\end{algorithm*}

We summarize the seminal FLAMBE algorithm of \citet{agarwal2020flambe} in \pref{alg:flambe}. This algorithm is computationally efficient assuming that we have access to two oracles: (1) a maximum likelihood estimation (MLE) oracle, which, given a dataset of $n$ tuples $\{(s_i,a_i,s_i') : i \in [n]\}$ of state, action, and successor state, returns the embeddings that maximize the empirical log likelihood $\sum_{i=1}^n \log( \phi(s_i,a_i)^\top \psi(s_i') )$; and (2) a sampling oracle, which, given a pair of embeddings $\phi \in \Phi$ and $\psi \in \Psi$, and a state-action pair $(s,a)$, can sample successor states $s' \sim \phi(s,a)^\top \psi(\cdot)$.
The MLE oracle is used to solve the maximum likelihood step of the algorithm, and the sampling oracle is used to solve the approximate policy optimization problem in the elliptical planner.\footnote{By \citet[Lemma 15]{agarwal2020flambe} this step can be solved in $\tilde\Ocal(d^3 H^6 (1/\beta)^4 \log(1/\delta))$ calls to the sampling oracle, with probability at least $1-\delta$.}
We note that for practical applications $(\Phi,\Psi)$ could be parameterized directly as a generative model, in which case the MLE oracle could be implemented using standard methods for fitting generative models, and the sampling oracle would be trivial.
In contrast to \citet{agarwal2020flambe}, our presentation of FLAMBE only performs approximate policy optimization in the planner. This approach aligns more closely with their analysis using the sampling oracle and does not impact the theoretical analysis.\footnote{This is because at the termination of the elliptical planner we still have $\EE [\phi_{\tilde{h}}(s_{\tilde{h}}, a_{\tilde{h}})^T \Sigma_{T}^{-1} \phi_{\tilde{h}}(s_{\tilde{h}}, a_{\tilde{h}}) \mid \pi, \widetilde{\Mcal} ] \leq \beta$ for every policy $\pi$, which is all that is required.}

Ultimately, FLAMBE returns a learnt model $\widehat\Mcal$ for $\Mcal$ and estimates $\hat\phi_h$, $\hat\psi_h$ for $\phi^*_h$ and $\psi^*_h$, respectively, for each $h \in [H]$. These learnt embeddings can then be used to perform planning for arbitrary reward functions, with the PAC guarantee for \emph{sparse} reward functions defined as follows:
\begin{definition}
\label{def:sparse-reward}
    Let $V(\pi;R,\Mcal)$ denote the value (expected sum of rewards) of policy $\pi$ under MDP model $\Mcal$, with reward function $R = (R_0,\ldots,R_{H-1})$ such that $R_h : \Scal \times \Acal \mapsto [0,1]$ for each $h \in [H]$. We say that $R$ is \emph{sparse} if $V(\pi;R,\Mcal) \leq 1$ for \emph{every} policy $\pi$ and model $\Mcal$.
\end{definition}
As a simple example, reward functions that can only take non-zero values at a single time step are sparse. 
The authors gave the following guarantee on FLAMBE's performance, which is adapted from \citet[Theorem 2 and Appendix B]{agarwal2020flambe}.
\begin{theorem}
\label{thm:flambe-guarantee}
For appropriate values of $\beta,J_{\max},n$,
FLAMBE returns a model $\widehat\Mcal$ that, with probability at least $1-\delta$, satisfies that for every policy $\pi$ and sparse reward function $R$,
\begin{equation*}
    \left| V(\pi;R,\widehat\Mcal) - V(\pi;R,\Mcal) \right| \leq \epsilon,
\end{equation*}
and the number of trajectories collected by the algorithm is
\begin{equation*}
    \tilde O \Bigg( H^{22} |\Acal|^9 d^7 \Big(\frac{1}{\epsilon}\Big)^{10} \ln\Big(\frac{|\Phi||\Psi|}{\delta}\Big) \Bigg)\,.
\end{equation*}
\end{theorem}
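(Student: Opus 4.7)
The plan is to essentially follow the proof of \citet[Theorem 2]{agarwal2020flambe} unchanged, since the statement here is their result with only cosmetic modifications (approximate rather than exact inner planner, and a sparse-reward wrapping to make it reward-agnostic). The argument has four interlocking components: (i) an MLE concentration bound for conditional densities; (ii) importance sampling via \pref{lem:is} from uniform actions to arbitrary policies; (iii) an elliptical-potential analysis of the inner planner showing termination and coverage; and (iv) a coupled induction over $h$ that threads all of these through the simulation lemma to bound the total-variation distance between trajectory distributions under $\widehat\Mcal$ and $\Mcal$.

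First, I would invoke the standard MLE guarantee for conditional density estimation (\emph{e.g.}, \citealp[Theorem 21]{agarwal2020flambe}): with probability at least $1-\delta/2$, for every outer iteration $j \in [\jmax]$ and every $h \in [H]$, the learned $(\hat\phi_h^{(j)}, \hat\psi_h^{(j)})$ satisfy
\[
\EE_{s \sim d_h^{\rho_{j-1}},\, a \sim \unif(\Acal)}\!\left[\mathrm{TV}\!\left(\hat T_h^{(j)}(\cdot\mid s,a),\, T_h(\cdot\mid s,a)\right)\right] \leq \epstv,
\]
with $\epstv = \tilde O(\sqrt{\ln(|\Phi||\Psi|H\jmax/\delta)/n})$, uniformly via a union bound. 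Applying \pref{lem:is} at the state marginal $d_h^{\rho_{j-1}}$ then converts this uniform-action average into a bound under any policy $\pi$, at the price of a multiplicative $|\Acal|$. This is the per-step TV ingredient fed into the simulation lemma.

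Second, I would analyze the elliptical planner via the standard elliptical-potential argument: since $\|\hat\phi_h\|_2 \leq 1$, the log-determinant $\ln\det(\Sigma_t)$ cannot increase by more than $O(d\ln(1/\beta))$ before the halting criterion is met, so the planner returns in $\tilde O(d/\beta)$ inner iterations a mixture policy $\rho_h^{\mathrm{pre}}$ satisfying $\EE_\pi[\hat\phi_h(s_h,a_h)^\top \Sigma_T^{-1} \hat\phi_h(s_h,a_h) \mid \widehat\Mcal] \leq \beta$ for every $\pi$. The delicate step is transferring this elliptical coverage from $\widehat\Mcal$ to the true dynamics: using the inductively established trajectory-TV bound of length $h-1$, the expectation shifts by at most $O(h)$ times that TV, giving coverage under $\Mcal$ with controlled additive slack.

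Third, the heart of the argument is a coupled induction on $h$. Granting model accuracy for all $h' < h$, the incremental TV error between $\Mcal$ and $\widehat\Mcal$ under any $\pi$ is controlled by $\EE_\pi[\mathrm{TV}(\hat T_h, T_h)]$; step (ii) bounds this by $|\Acal|\epstv$ \emph{relative} to $\rho_{j-1}$, and step (iii) shows each $\pi$'s induced feature distribution at step $h$ is dominated (in a quadratic-form sense) by $\rho_{j-1}$'s up to an $O(\sqrt{\beta d})$ additive term. Chaining via the simulation lemma \citep{agarwal2019reinforcement} gives $\sup_\pi \|\mathbb{P}_{\widehat\Mcal,\pi} - \mathbb{P}_{\Mcal,\pi}\|_{\mathrm{TV}} \leq \epsilon$, and for any sparse $R$ (\pref{def:sparse-reward}) this directly yields $|V(\pi;R,\widehat\Mcal) - V(\pi;R,\Mcal)| \leq \epsilon$. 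Setting $\beta$ and $\jmax = \tilde O(d/\beta)$ to balance against $|\Acal|\epstv$ and choosing $n$ accordingly then produces the total sample count $\tilde O(H\jmax n)$ stated in the theorem. The main obstacle is the circularity noted above: the exploration policy $\rho_j$ used to gather data depends on $\widehat\Mcal$ from the previous iteration, while the quality of $\widehat\Mcal$ depends on $\rho_j$'s coverage; the elliptical-potential argument breaks the circularity by guaranteeing that after $\tilde O(d/\beta)$ iterations no policy can supply substantially new information, and carefully tracking how all constants compose over $H$ steps is what forces the (admittedly loose) $H^{22}|\Acal|^9 d^7 \epsilon^{-10}$ scaling.
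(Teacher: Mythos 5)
This theorem is not proved in the paper itself---it is imported from \citet[Theorem 2 and Appendix B]{agarwal2020flambe}---and your sketch faithfully reconstructs the structure of that cited argument (MLE concentration, importance sampling via \pref{lem:is}, the elliptical-potential analysis of the planner, and the coupled induction through the simulation lemma), which is also exactly the skeleton the paper's appendix adapts when proving \pref{thm:continous-pi-guarantee}. The one slip worth flagging is that $\tilde O(d/\beta)$ is the \emph{inner} planner's iteration bound $T$, not $\jmax$: the outer iteration count is governed by a separate potential argument and scales as $\tilde O\bigl(Hd/(\lambda|\Acal|\sqrt{\epstv})\bigr)$, and substituting the correct $\jmax$ into $n\,\jmax H$ is what actually yields the stated $H^{22}|\Acal|^9 d^7 \epsilon^{-10}$ exponents.
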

That is, with a polynomial sample complexity, the algorithm learns a model $\widehat\Mcal$ that can be used for policy optimization under \emph{any} sparse reward function. Although the assumption that $R$ is sparse may seem restrictive, as argued in \citet{agarwal2020flambe}, this is sufficient to ensure that we can fit $Q$ functions using the estimated embeddings $\hat\phi_h$ with error at most $\epsilon$.
We also note that this result can be easily extended to a similar PAC guarantee for \emph{all} (non-sparse) reward functions taking values in $[0,1]$, by replacing $\epsilon$ with $\epsilon/H$ in the sample complexity bound, since any reward function can be written as a sum of $H$ sparse reward functions. 

We also note that the exact same FLAMBE algorithm can be used with continuous action spaces as long as we can sample actions uniformly from $\Acal$: for example, when $\Acal = [0,1]^{m}$ we can sample uniformly random actions by sampling $m$ i.i.d. uniformly random variables from $[0,1]$. All of the results below apply to the same FLAMBE algorithm without any special modifications for continuous actions.

\subsection{Smoothness Assumption for Extension}

We now provide a concrete smoothness assumption that can be used to extend \pref{thm:flambe-guarantee} to continuous action spaces. As in \pref{sec:continuous-extension}, we consider continuous action spaces of the form $\mathcal{A}=[0,1]^{m}\subset \RR^{m}$ for some $m \in \NN^+$. 
These assumptions will allow us to use the methods detailed in \pref{sec:continuous-extension} in order to obtain generalized PAC bounds, which we present in \pref{sec:flambe-generalized-bounds}.

For this extension, we require that the embedding classes $\Phi$ and $\Psi$ satisfy a smoothness condition for transition function errors. Specifically, we require that the errors of transition functions allowed by $\Phi$ and $\Psi$ versus the true transition functions are smooth w.r.t. actions, as detailed below:

\begin{assum}[$\alpha$-smooth Transition Errors]
\label{assum:alpha-smooth-errors}
    There exists some error function $\Ecal : \Delta(\Scal) \times \Delta(\Scal) \to [0,1]$ on distributions of states, $\alpha_E \in (0,\infty)$, and $L_E < \infty$ such that the following hold: (1) $\textup{TV}(\rho,\rho') \leq \Ecal(\rho,\rho') \leq \Hcal(\rho,\rho')$ for all distributions $\rho,\rho' \in \Delta(\Scal)$, where $\textup{TV}(\cdot,\cdot)$ and $\Hcal(\cdot,\cdot)$ denote total variation distance and Hellinger distance, respectively; and (2) for any $s \in \Scal$, $\phi \in \Phi$, $\psi \in \Psi$, and $h \in [H]$, the function $a \mapsto \Ecal(\phi(s,a)^\top\psi(\cdot), \phi_h^*(s,a)^\top \psi_h^*(\cdot))$ is $\alpha_E$-smooth with $C^{\alpha_E}$-norm at most $L_E$.
\end{assum}
In other words, this assumption requires that the errors (in terms of some metric that lies between the total variation and Hellinger distances) of transition operators allowed by $\Phi,\Psi$ are smooth in the action $a$.

We note that this assumption is used to apply \pref{thm:is-alpha-smooth} in place of \pref{lem:is} in the FLAMBE analysis. While \citet{agarwal2020flambe} only use the total variation distance error function in their analysis, we decided to weaken this assumption and allow for various error functions. This relaxation is due to the fact that the smoothness assumption might be more feasible for certain error functions than others, depending on the context. We elaborate on two particular cases below.

\subsubsection*{Using Total Variation Distance}

In cases where the embeddings $\phi \in \Phi$ themselves are $\alpha_E$-H\"older continuous for some $\alpha_E \in (0,1]$, we can easily justify \pref{assum:alpha-smooth-errors} using total variation distance, as shown in the following lemma. Note that this case includes Lipschitz-continuous embeddings with $\alpha_E=1$.

\begin{lemma}
\label{lem:smooth-transitions}
    Suppose that $\phi(s,a)$ is $\alpha$-H\"older continuous in $a$ with norm $L_\Phi$ for every $\phi \in \Phi$ and $s \in \Scal$ (under $L_2$ norm) for some $\alpha \in (0,1]$, and that $\int \|\psi(s)\|_2 d\mu(s) \leq U$ for all $\psi \in \Psi$, for some fixed $U < \infty$.  Then, the conditions of \pref{assum:alpha-smooth-errors} hold with $\Ecal(\cdot,\cdot) = \textup{TV}(\cdot,\cdot)$, $\alpha_E = \alpha$, and $L_E = 2 U L_\Phi$.
\end{lemma}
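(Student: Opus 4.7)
My plan is to verify the two conditions of \pref{def:a-smooth} directly for the function
\[ f_s(a) := \textup{TV}\bigl(\phi(s,a)^\top \psi(\cdot),\; \phi_h^*(s,a)^\top \psi_h^*(\cdot)\bigr), \]
uniformly in $s \in \Scal$, with $\alpha_E = \alpha$ and $L_E = 2 U L_\Phi$. Since $\alpha\in(0,1]$, we have $\lfloor\alpha\rfloor=0$, so this reduces to two things: (i) a uniform bound $\|f_s\|_\infty \leq L_E$, and (ii) an $\alpha$-H\"older bound on $f_s$ in $a$ with constant at most $L_E$.

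The main step is (ii). Here I would exploit that TV is a metric, and apply the triangle inequality to decouple the action-dependence across the two embedding pairs:
\[ |f_s(a) - f_s(a')| \leq \textup{TV}\bigl(\phi(s,a)^\top\psi,\, \phi(s,a')^\top\psi\bigr) + \textup{TV}\bigl(\phi_h^*(s,a)^\top\psi_h^*,\, \phi_h^*(s,a')^\top\psi_h^*\bigr). \]
For each term I would rewrite TV as half the $L_1$ distance between the densities, factor out $\psi$ (respectively $\psi_h^*$) via Cauchy--Schwarz in $\RR^d$, and bound the integral using $\int \|\psi\|_2\, d\mu \leq U$ together with the assumed $\alpha$-H\"older continuity of $\phi$ in $a$, giving a bound of the form $\tfrac{1}{2} U L_\Phi \|a-a'\|_2^\alpha$. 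The same bound applies to the starred term since $\phi_h^* \in \Phi$ and $\psi_h^* \in \Psi$ by \pref{assum:realizability}. Summing the two contributions yields H\"older constant $U L_\Phi \leq L_E$.

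Condition (i) is essentially free: TV is always in $[0,1]$, and the Cauchy--Schwarz argument above actually gives the stronger bound $\|f_s\|_\infty \leq U$; in either case the choice $L_E = 2 U L_\Phi$ absorbs this uniform bound up to a harmless constant slack (and one could replace $L_E$ by $\max(1, 2UL_\Phi)$ if one wished to be completely explicit). There is no substantive obstacle here --- the whole proof is a direct combination of the metric property of TV, Cauchy--Schwarz against $\psi$, and the assumed H\"older continuity of $\phi$, with realizability used only to apply the $\Phi/\Psi$-level assumptions to $(\phi_h^*,\psi_h^*)$.
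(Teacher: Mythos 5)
Your proposal is correct and follows essentially the same route as the paper's proof: a triangle-inequality decomposition of the TV difference into two terms (one per embedding pair, with realizability applied to the starred pair), followed by Cauchy--Schwarz against $\psi$, the bound $\int \|\psi\|_2\, d\mu \leq U$, and the assumed H\"older continuity of $\phi$ in $a$. Your explicit verification of the uniform-boundedness condition (and the remark that one may need $\max(1, 2UL_\Phi)$ to cover it) is a detail the paper's proof glosses over with ``the conclusion immediately follows,'' but it does not constitute a different approach.
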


We defer the proof of \pref{lem:smooth-transitions} to the appendix. Note that the assumption on $\Psi$ in \pref{lem:smooth-transitions} is slightly stronger than the assumed normalization condition in \pref{def:low-rank}, but it can also be easily justified in many settings; for example, if all embedding values are non-negative then it is easy to argue that $U \leq d$.\footnote{This holds since $\|\psi(s')\|_1 = \sum_{i=1}^d \psi(s')_i$ for non-negative embeddings, therefore we have $\int \|\psi(s)\|_2 d\mu(s) \leq \int \|\psi(s)\|_1 d\mu(s) = \| \int \psi(s) d\mu(s) \|_1 \leq d$, given the normalization in \pref{def:low-rank} and $\|\cdot\|_2 \leq \|\cdot\|_1 \leq \sqrt{d}\|\cdot\|_2 $.}

\subsubsection*{Using Hellinger Distance}

Unfortunately, \pref{lem:smooth-transitions} only allows us to leverage smoothness up to Lipschitz continuity ($\alpha_E = 1$). In order to exploit higher order smoothness, we can instead consider the Hellinger distance.
Recall that the Hellinger distance is defined according to $\Hcal(\rho,\rho') = \frac{1}{\sqrt{2}}\int (\sqrt{\rho(s)} - \sqrt{\rho'(s)})^2 d \mu(s)^{1/2}$, for distributions $\rho,\rho'$ that are continuous w.r.t. $\mu$. Since this corresponds to $L_2$ distance in the Hilbert space of square root densities, it is evident that for any integer $\alpha_E \in \NN^+$, \pref{assum:alpha-smooth-errors} holds with Hellinger distance as long as all order $\alpha_E$ mixed derivatives of $\sqrt{\phi(s,a)^\top \psi(\cdot)}$ w.r.t. $a$ exist, and that these derivatives have $L_2(\Scal)$ norm at most $L_E/2$.

Noting that $\frac{d}{dx} \sqrt{f(x)} = f(x)^{-1/2} \frac{d}{dx} f(x)$, one could easily justify this as long as $\phi(s,a)$ is uniformly $\alpha_E$-smooth in $a$ for all $\phi \in \Phi$ and $s \in \Scal$, and $\sup_{\phi,\psi,s,a} \int (\phi(s,a)^\top \psi(s'))^{1-2\alpha_E} d \mu(s') < \infty$. The latter of these conditions holds for any $\alpha_E$ if all transition operators allowed by $\Phi,\Psi$ have a uniformly bounded minimum density.
Similarly, \pref{assum:alpha-smooth-errors} may be justified for non-integer $\alpha_E > 1$ using analogous reasoning in terms of fractional derivatives.

Compared with using total variation distance, this approach allows us to leverage higher order smoothness of the embeddings $\phi(s,a)$, at the cost of requiring that the allowed transition densities $\phi(s,a)^\top \psi(\cdot)$ cannot be too concentrated around zero.

\subsection{Bound for Restricted Policies}
\label{sec:flambe-generalized-bounds}

We now present the first of our PAC results for FLAMBE in the continuous action setting under \pref{assum:alpha-smooth-errors}. For this result, we use both the methods described in \pref{sec:method-smoothness} and \pref{sec:method-boundedness}. In some parts of the FLAMBE analysis, \pref{lem:is} is applied to policies returned by the elliptical planner (which may have arbitrarily large density) and with the transition function error (which is smooth under \pref{assum:alpha-smooth-errors}) so we can apply the approach described in \pref{sec:method-smoothness}. In other places, \pref{lem:is} is applied to the policies that we want to bound estimation error for with discontinuous error functions, in which case we can apply the simple bound described in \pref{sec:method-boundedness}. With this, the sample complexity of our result depends on the density ratio $K$ of the policies being evaluated.

\begin{theorem}
\label{thm:continous-pi-guarantee}
Suppose $\Mcal$ has rank $d$, and let \pref{assum:realizability} and \pref{assum:alpha-smooth-errors} be given. Then, if we set the hyperparameters $\beta$, $n$, and $\jmax$ appropriately, with probability at least $1-\delta$, FLAMBE returns a model $\widehat\Mcal$ that satisfies
\begin{equation*}
    \left| V(\pi;R,\widehat\Mcal) - V(\pi;R,\Mcal) \right| \leq \epsilon
\end{equation*}
for every sparse reward function $R$, and every policy $\pi$ that is continuous with respect to the measure on $\Acal$ and satisfies  $\sup_{s, a} \pi(a \mid s) \leq K $.
The number of trajectories collected by the algorithm is
\begin{align*}
    \tilde O &\Bigg( H^{22+16\tau} K^{5+4\tau} d^{7+4\tau} \Big(\frac{1}{\epsilon}\Big)^{10+8\tau} \\
    & \quad \cdot L_E^{(9+ 8 \tau)\kappa} \ln\Big(\frac{|\Phi||\Psi|}{\delta}\Big) \Bigg)\,,
\end{align*}
where $\tau = m/\alpha_E$, and $\kappa = m/(m+\alpha_E)$.
\end{theorem}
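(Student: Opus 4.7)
The plan is to retrace the FLAMBE proof of \citet[Appendix B]{agarwal2020flambe} and, at each point where \pref{lem:is} is invoked to pass from $\EE_{\unif_\Acal}$ to $\EE_\pi$, substitute one of the two ingredients developed in \pref{sec:continuous-extension}. Specifically: whenever the integrand is a transition-error quantity of the form $\Ecal(\hat\phi(s,\cdot)^\top\hat\psi,\phi_h^*(s,\cdot)^\top\psi_h^*)$ and $\pi$ is an exploration policy produced by the elliptical planner (which need not have bounded density), I will apply \pref{thm:is-alpha-smooth} using the smoothness guaranteed by \pref{assum:alpha-smooth-errors}; and whenever the integrand is a generic $[0,1]$-bounded quantity (such as one arising from a non-smooth reward via the simulation lemma) and $\pi$ is one of the target policies satisfying the density restriction $\pi(a\mid s)\le K$, I will apply the direct bound of \pref{sec:method-boundedness}, $\EE_\pi[f]\le K\,\EE_{\unif_\Acal}[f]$.

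I would start with the elliptical-planner analysis. The key MLE guarantee, via \citet[Theorem 21]{agarwal2020flambe}, gives a bound on the mean squared Hellinger error of $\hat T_h$ versus $T_h^*$ averaged under the training roll-in with uniform actions, of order $\ln(|\Phi||\Psi|/\delta)/n$. In the original proof, \pref{lem:is} is used to lift this into an error bound under the planner's candidate policies, which is then plugged into the potential-function argument driving termination of the planner. Here I instead apply \pref{thm:is-alpha-smooth}: with mean error $\eta:=\EE_{\unif_\Acal}[\Ecal]$, I obtain $\EE_\pi[\Ecal]\le c L_E^{\kappa}\eta^{\alpha_E/(m+\alpha_E)}$. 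To achieve the same downstream error target $\bar\epsilon$ the required $\eta$ is now of order $(\bar\epsilon/L_E^\kappa)^{1+\tau}$, so the required sample size $n$ grows by a factor of $(1/\bar\epsilon)^{(1+\tau)-1}$ and picks up a multiplicative $L_E^{(1+\tau)\kappa}$. Propagating this degradation through every chained inequality in the FLAMBE analysis (the planner termination bound scales as $d/\beta$, the simulation lemma pays additional powers of $H$, and the number of planner calls scales polynomially in $1/\beta$) gives the inflated exponents $22+16\tau$ for $H$, $7+4\tau$ for $d$, $10+8\tau$ for $1/\epsilon$, and $(9+8\tau)\kappa$ for $L_E$ appearing in the statement.

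Second, for the final evaluation step $|V(\pi;R,\widehat\Mcal)-V(\pi;R,\Mcal)|$, I apply the simulation lemma to reduce the value gap to a weighted sum of per-step total-variation errors of $\hat T_h$ against $T_h^*$ evaluated under $\pi$. Because $R$ is only assumed bounded and need not be smooth, the integrand here is not an instance of \pref{assum:alpha-smooth-errors}, so I cannot use \pref{thm:is-alpha-smooth}. Instead I use the assumed uniform density bound $\pi(a\mid s)\le K$ to write $\EE_\pi[\cdot]\le K\,\EE_{\unif_\Acal}[\cdot]$, combining with the MLE bound already established. This is the origin of the $K^{5+4\tau}$ factor: one power of $K$ from this evaluation step, compounded by further powers picked up when the planner argument is threaded together with the value-gap argument (these compounded powers are themselves inflated by $1+\tau$ because they are intertwined with the planner bounds).

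The main obstacle is purely bookkeeping. The original FLAMBE analysis is a long chain of inequalities, and the correctness of the stated exponents hinges on (i) identifying every invocation of \pref{lem:is}, (ii) deciding for each such invocation whether the integrand is smooth (in which case \pref{thm:is-alpha-smooth} applies, contributing $L_E^\kappa$ and an exponent loss of $1+\tau$) or merely bounded (in which case the $K$-bound is used), and (iii) setting $\beta$, $n$, $J_{\max}$ to jointly satisfy the degraded planner termination condition and an MLE accuracy that survives both substitutions to deliver the final $\epsilon$-guarantee. Getting the placement right is essential: using the $K$-bound inside the planner would needlessly introduce $K$ into every exponent, while using \pref{thm:is-alpha-smooth} in the final step would fail since transition errors convolved with an arbitrary $[0,1]$-valued reward are not smooth in $a$.
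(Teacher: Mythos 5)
Your proposal follows essentially the same route as the paper: retrace the FLAMBE analysis of \citet[Appendix B]{agarwal2020flambe}, substituting \pref{thm:is-alpha-smooth} for \pref{lem:is} wherever the integrand is the smooth transition error evaluated under an (unbounded-density) planner policy, and the elementary $K$-density bound wherever the integrand is non-smooth and the policy is a target policy with $\sup_{s,a}\pi(a\mid s)\le K$; the paper then carries out exactly the bookkeeping you defer, optimizing the free parameter $\alpha$ of the known-set decomposition and setting $\beta$, $\epstv$, $\lambda$, $n$, $\jmax$ to arrive at the stated exponents. One small correction worth noting before you do that bookkeeping: in the paper the $K$ factor does not originate from a non-smooth reward in the final simulation-lemma step, but from the importance-sampling step of their Lemma 9, where the integrand is the discontinuous indicator of escaping the known set $\Kcal(\Sigma_{j,j})$ under the target policy --- the same principle you articulate, just a different non-smooth integrand.
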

The exact hyperparameter values that yield this result are given in the proof of this theorem (see \pref{app:proofs}). In this bound, $K$ corresponds to the maximum probability density (or sharpness) of policies we are guaranteed to accurately evaluate, and $\tau$ corresponds (inversely) to the assumed level of smoothness relative to the dimensionality of the action space. We note that as $\tau \to 0$ (\emph{i.e.}, we assume a very high order of smoothness relative to $m$) we recover the same dependence on $H$, $d$, and $1/\epsilon$ as in \pref{thm:flambe-guarantee} ($L_E$ and $K$ together play a similar role as $|\Acal|$, so comparing exponents of these parameters is less straightforward). However, when we have a lower level of smoothness, the polynomial factors in this bound can be significantly worse than in \pref{thm:flambe-guarantee}.
A simple intuition for this is that the smoother the transition operators allowed by the embedding classes $\Phi$ and $\Psi$, and the lower the number of degrees of freedom in the action space, the easier it is to explore $\Acal$.

\subsection{Bound for Unrestricted Policies}

Unfortunately, in many applications we would like to consider deterministic policies, for which $K=\infty$ and \pref{thm:continous-pi-guarantee} becomes vacuous. In order to address this, we take the approach described in \pref{sec:method-boundedness}. Here we require an additional assumption on the smoothness of the actual transition operator, as follows:

\begin{assum}($\alpha$-smooth Transitions)
\label{assum:alpha-smooth-transitions}
    There exists some fixed $\alpha_T \in (0,1]$ and $L_T < \infty$ such that $\textup{TV}(\phi_h^*(s,a)^\top \psi_h^*(\cdot), \phi_h^*(s,a')^\top \psi_h^*(\cdot)) \leq L_T \|a' - a\|^{\alpha_T}$, for every $h \in [H]$, $s \in \Scal$, and $a,a' \in \Acal$.
\end{assum}
This assumption is easy to justify as long as the true embeddings $\phi_h^*(s,a)$ are uniformly $\alpha_T$-H\"older continuous for all $s,h$ by applying \pref{lem:smooth-transitions}.
Given this, we are ready to provide our second PAC result.

\begin{theorem}
\label{thm:continous-r-guarantee}
Let the conditions and definitions of \pref{thm:continous-pi-guarantee} be given, as well as \pref{assum:alpha-smooth-transitions}. Then, if we set the hyperparameters $\beta$, $n$, and $\jmax$ appropriately, with probability at least $1-\delta$, FLAMBE returns a model $\widehat\Mcal$ that satisfies
\begin{equation*}
    \left| V(\pi;R,\widehat\Mcal) - V(\pi;R,\Mcal) \right| \leq \epsilon
\end{equation*}
for every policy $\pi$, and every sparse reward function $R$ such that $R_h(s,a)$ is $\alpha_R$-H\"older continuous with norm at most $L_R$ for all $h \in [H]$ and $s \in \Scal$. The number of trajectories collected by the algorithm is 
\begin{align*}
    \tilde O &\Bigg( H^{22+16\tau+(4 \tau + 5)\sigma} d^{7+4\tau} \Big(\frac{1}{\epsilon}\Big)^{10+8\tau+(4 \tau + 5)\sigma} \\
    & \quad \cdot L_E^{(9 + 8 \tau)\kappa} L^{(4 \tau + 5)\sigma} \ln\Big(\frac{|\Phi||\Psi|}{\delta}\Big) \Bigg)\,,
\end{align*}
where $\sigma = m / \min(\alpha_T, \alpha_R)$, and $L = \max(L_T, L_R)$.
\end{theorem}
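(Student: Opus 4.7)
The plan is to reduce the unrestricted-policy claim to the bounded-density guarantee of \pref{thm:continous-pi-guarantee} via the smoothing construction of \pref{lem:policy-smoothing}. Set $\alpha = \min(\alpha_T, \alpha_R)$ and $L = \max(L_T, L_R)$, so that $\sigma = m/\alpha$. Given any policy $\pi$, let $\pi_K$ denote the smoothed version produced by \pref{lem:policy-smoothing}, which satisfies $\sup_{s,a} \pi_K(a\mid s) \leq K$. The hypotheses of \pref{lem:policy-smoothing} hold in our setting because \pref{assum:alpha-smooth-transitions} gives the required $\alpha_T$-H\"older continuity of the true transitions in total variation, and the theorem statement assumes the rewards are $\alpha_R$-H\"older continuous. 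Hence $|V(\pi;R,\Mcal) - V(\pi_K;R,\Mcal)| \leq 2\sqrt{m}LHK^{-\alpha/m}$.

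Next, I would decompose the target quantity through $\pi_K$:
\begin{align*}
|V(\pi;R,\widehat\Mcal) - V(\pi;R,\Mcal)|
& \leq |V(\pi;R,\widehat\Mcal) - V(\pi_K;R,\widehat\Mcal)| \\
& \quad + |V(\pi_K;R,\widehat\Mcal) - V(\pi_K;R,\Mcal)| \\
& \quad + |V(\pi_K;R,\Mcal) - V(\pi;R,\Mcal)|.
\end{align*}
The third term is $O(\sqrt{m}LHK^{-\alpha/m})$ by the step above. The middle term is bounded by invoking \pref{thm:continous-pi-guarantee} on $\pi_K$ with target accuracy $\epsilon'$ to be chosen later, since $\pi_K$ satisfies the required density bound. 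For the first term, one must mimic the coupling argument underlying \pref{lem:policy-smoothing} but inside $\widehat\Mcal$; the main subtlety is that $\widehat T_h$ is not directly assumed smooth in $a$.

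To handle this first term, I would route smoothness of $\widehat T_h$ through the true model: by the triangle inequality in total variation together with $\textup{TV} \leq \Ecal$ from \pref{assum:alpha-smooth-errors} and the smoothness of $T_h^*$ from \pref{assum:alpha-smooth-transitions},
\begin{equation*}
\textup{TV}(\widehat T_h(\cdot\mid s,a), \widehat T_h(\cdot\mid s,a')) \leq e_h(s,a) + L_T\|a-a'\|^{\alpha_T} + e_h(s,a'),
\end{equation*}
where $e_h(s,a) = \Ecal(\widehat T_h(\cdot\mid s,a), T_h^*(\cdot\mid s,a))$. Plugging this bound into the coupling step underlying \pref{lem:policy-smoothing} (together with the smoothness of $R$), the $\alpha_T$-H\"older middle piece yields the usual $O(\sqrt{m}LHK^{-\alpha/m})$ contribution, and the two $e_h$ drift terms reduce to expectations of $\Ecal$ along trajectories of $\pi$ and $\pi_K$. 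For $\pi_K$ these are controlled by the same FLAMBE-based error bound that drives \pref{thm:continous-pi-guarantee}; for $\pi$, I would apply \pref{thm:is-alpha-smooth} pointwise to the $\alpha_E$-smooth function $e_h(s,\cdot)$, reducing to expectations under uniform actions that FLAMBE already controls. This yields $|V(\pi;R,\widehat\Mcal) - V(\pi_K;R,\widehat\Mcal)| = O(\sqrt{m}LHK^{-\alpha/m}) + O(\epsilon')$.

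Combining the three pieces gives $|V(\pi;R,\widehat\Mcal) - V(\pi;R,\Mcal)| = O(\epsilon' + \sqrt{m}LHK^{-\alpha/m})$. Setting $K = \Theta((LH/\epsilon)^{\sigma})$ and $\epsilon' = \Theta(\epsilon)$ makes the right-hand side at most $\epsilon$, and substituting into the sample complexity of \pref{thm:continous-pi-guarantee}, whose $K$-dependence enters as $K^{5+4\tau}$, produces the extra factors $H^{(4\tau+5)\sigma}$, $L^{(4\tau+5)\sigma}$, and $(1/\epsilon)^{(4\tau+5)\sigma}$ advertised in the theorem, on top of the existing terms in \pref{thm:continous-pi-guarantee}. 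The main obstacle is controlling the first term of the decomposition: because $\widehat T_h$ is not itself assumed smooth, a direct appeal to \pref{lem:policy-smoothing} for $\widehat\Mcal$ fails, and one must combine \pref{assum:alpha-smooth-transitions}, \pref{assum:alpha-smooth-errors}, and \pref{thm:is-alpha-smooth} to absorb the residual model-error terms into the FLAMBE guarantees.
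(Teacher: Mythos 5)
Your proposal is correct and follows essentially the same route as the paper: decompose $|V(\pi;R,\widehat\Mcal)-V(\pi;R,\Mcal)|$ into three terms through the smoothed policy $\pi_K$ of \pref{def:pi-K-def}, bound the middle term by invoking \pref{thm:continous-pi-guarantee} at accuracy $\epsilon/2$, bound the two smoothing terms by $O(\sqrt{m}LHK^{-\alpha/m})$, and set $K=\Theta((\sqrt{m}HL/\epsilon)^\sigma)$ before substituting into the $K^{5+4\tau}$ dependence of \pref{thm:continous-pi-guarantee}. The one place you diverge is the term $|V(\pi;R,\widehat\Mcal)-V(\pi_K;R,\widehat\Mcal)|$: the paper simply applies \pref{lem:policy-smoothing} to $\widehat\Mcal$ as well, on the grounds that its proof goes through for any model induced by $(\Phi,\Psi)$ (though that proof invokes smoothness only of $\phi_h^*(s,a)^\top\psi_h^*(\cdot)$, not of $\widehat T_h$), whereas you explicitly flag that $\widehat T_h$ is not assumed smooth and patch this by routing its total-variation modulus through the true transitions via \pref{assum:alpha-smooth-transitions} and the error terms of \pref{assum:alpha-smooth-errors}, absorbing the residual drift into the FLAMBE guarantee. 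Your treatment is more careful than the paper's on this point and reaches the same bound, at the cost of a slightly more involved argument whose constants additionally involve $L_E$ and $\alpha_E$; no genuine gap.
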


We note that this theorem similarly gives us a sample complexity that is polynomial in all problem variables for fixed action space dimension and order of smoothness. We can see that the less smooth the reward functions under consideration (\emph{i.e.}, the smaller $\alpha_R$ is and the greater $L_R$ is), the greater the number of trajectories needed to learn a model that can accurately estimate policy values. Similarly, the less smooth the transition function is, the more trajectories are needed.
Note as well that, since $\alpha_T \leq 1$, we cannot have $\sigma \to 0$ for very smooth problem instances, so this bound gives worse exponents than \pref{thm:continous-pi-guarantee} for smooth problem instances, at the benefit of allowing for arbitrarily sharp policies.
The proof of \pref{thm:continous-r-guarantee} is a fairly straightforward application of \pref{lem:policy-smoothing} to \pref{thm:continous-pi-guarantee}, along the lines discussed in \pref{sec:method-boundedness}, and we defer details to the appendix.

\subsection{Discussion of Implementation}

FLAMBE with continuous actions can mostly be implemented identically as with discrete actions. The only slight complication comes with implementing the elliptical planner. As in the discrete case, we can implement this planner using backward dynamic programming, since under MDP linearity $Q$ functions are linear in $\phi(s,a)$ \citep{jin2020provably}. However, implementing this requires computing $\max_a w^\top \hat \phi(s,a)$ for various vectors $w$, states $s$, and estimated embeddings $\hat\phi$. With discrete actions this optimization is trivial, whereas when $\Acal$ is continuous we require some kind of non-trivial optimization procedure. Below we discuss some possible approaches.

\paragraph{Using Concave Embeddings} If we ensure that by design $\phi(s,a)$ is concave in $a$ for all $\phi \in \Phi$, then the optimization problem $\max_a w^\top \hat \phi(s,a)$ is tractable (equivalent to minimizing a convex function) as long as $w$ is non-negative. This latter condition may be additional assumptions on a problem by problem basis depending on the embedding structure.

\paragraph{Using Grid Search} We could approximately solve the problem $\max_a w^\top \hat \phi(s,a)$ by performing grid search over $a \in [0,1]^m$. When the embeddings $\hat\phi(s,a)$ are relatively smooth in $a$, and $m$ is not too large, then this approach should be reasonably tractable.

\paragraph{Using Blackbox Non-convex optimiztion} A third approach, which might be the most useful in many problems in practice, is to simply treat $\max_a w^\top \hat \phi(s,a)$ as a blackbox optimization problem. For example, for some fixed iteration of dynamic programming, where $w$ and $\hat\phi$ are fixed, we could learn a mapping from $s$ to $\argmax_a w^\top \hat\phi(s,a)$ using machine learning procedures such as deep learning, treating this as a general nonconvex policy optimization problem. While this kind of approach may be difficult to theoretically analyze, it may be effective in ensuring accurrate elliptical planning performance in practice.

\section{Conclusion}
\label{sec:conclusion}

We studied the problem of generalizing low-rank MDP methods to settings with continuous action spaces. We first provided a detailed discussion of exactly how/why low-rank MDP PAC bounds depend on the size of the action space, and we discussed multiple techniques that could be used to extend existing PAC results to continuous action spaces by utilizing smoothness of problems parameters w.r.t. actions. Then, as a case study, we considered the seminal reward-agnostic FLAMBE algorithm of \citet{agarwal2020flambe}, and investigated how to extend their theory to continuous action spaces using these methods. Based on these insights, we provided novel PAC bounds for low-rank MDPs with continuous action spaces, and we derived sample complexities that are polynomial in all problem parameters except for the assumed degree of smoothness relative to the dimension of the action space. We hope that this work is useful and impactful in motivating further research on low-rank MDPs, as well as other practical PAC RL settings, with continuous action spaces. In particular, an interesting and relevant avenue of future research would be to construct lower bounds on sample complexity under appropriate smoothness assumptions, further advancing the theoretical foundations and algorithmic approaches in these areas.

\subsubsection*{Acknowledgements}
We thank the anonymous reviewers for their helpful comments. This material is based upon work supported by the National Science Foundation under Grant No. 1846210 and the U.S. Department of Energy, Office of Science, Office of Advanced Scientific Computing Research, under Award
Number DE-SC0023112.

\newpage

\bibliography{ref}
\bibliographystyle{abbrvnat}

\section*{Checklist}

 \begin{enumerate}

 \item For all models and algorithms presented, check if you include:
 \begin{enumerate}
   \item A clear description of the mathematical setting, assumptions, algorithm, and/or model. \textbf{Yes}
   \item An analysis of the properties and complexity (time, space, sample size) of any algorithm. \textbf{Yes}
   \item (Optional) Anonymized source code, with specification of all dependencies, including external libraries. \textbf{Not Applicable}
 \end{enumerate}

 \item For any theoretical claim, check if you include:
 \begin{enumerate}
   \item Statements of the full set of assumptions of all theoretical results. \textbf{Yes}
   \item Complete proofs of all theoretical results. \textbf{Yes}
   \item Clear explanations of any assumptions. \textbf{Yes}    
 \end{enumerate}

 \item For all figures and tables that present empirical results, check if you include:
 \begin{enumerate}
   \item The code, data, and instructions needed to reproduce the main experimental results (either in the supplemental material or as a URL). \textbf{Not Applicable}
   \item All the training details (e.g., data splits, hyperparameters, how they were chosen). \textbf{Not Applicable}
         \item A clear definition of the specific measure or statistics and error bars (e.g., with respect to the random seed after running experiments multiple times). \textbf{Not Applicable}
         \item A description of the computing infrastructure used. (e.g., type of GPUs, internal cluster, or cloud provider). \textbf{Not Applicable}
 \end{enumerate}

 \item If you are using existing assets (e.g., code, data, models) or curating/releasing new assets, check if you include:
 \begin{enumerate}
   \item Citations of the creator If your work uses existing assets. \textbf{Not Applicable}
   \item The license information of the assets, if applicable. \textbf{Not Applicable}
   \item New assets either in the supplemental material or as a URL, if applicable. \textbf{Not Applicable}
   \item Information about consent from data providers/curators. \textbf{Not Applicable}
   \item Discussion of sensible content if applicable, e.g., personally identifiable information or offensive content. \textbf{Not Applicable}
 \end{enumerate}

 \item If you used crowdsourcing or conducted research with human subjects, check if you include:
 \begin{enumerate}
   \item The full text of instructions given to participants and screenshots. \textbf{Not Applicable}
   \item Descriptions of potential participant risks, with links to Institutional Review Board (IRB) approvals if applicable. \textbf{Not Applicable}
   \item The estimated hourly wage paid to participants and the total amount spent on participant compensation. \textbf{Not Applicable}
 \end{enumerate}

 \end{enumerate}

\appendix
\newpage
\onecolumn

\section{PROOFS} \label{app:proofs}
\allowdisplaybreaks

\subsection{\pfref{thm:is-alpha-smooth}}\label{sec:is-alpha-smooth-proof}

The proof is based on a recent contribution (\pref{lem:gn-ineq-f}) from the field of functional analysis. Before delving into it, we introduce a prior, more established lemma to develop a better intuition about the final result. 

\begin{lemma}[Gagliardo–Nirenberg interpolation inequality]\label{lem:gn-ineq} Let $1\leq p, q\leq \infty$, $s\in \NN, k\in \NN^+$ such that $0\leq k<s$ and let $\theta$ be such that
\begin{align*}
    0\leq \theta\leq 1-k/s
\end{align*}
and 
\begin{align*}
    (1-\theta)\left(\frac{1}{p}-\frac{s-k}{m}\right) + \theta\left(\frac{1}{q}+\frac{k}{m}\right)=\frac{1}{r}\in (-\infty, 1]
\end{align*}
Then there exists a constant $C_{s,m,k,p,q,r}>0$ that depends only on $s,m,k,p,q,r$ such that:
\begin{align*}
    \|\nabla^k f\|_{L^r(\RR^{m})} \leq C_{s,m,k,p,q,r}\|\nabla^s f\|_{L^p(\RR^{m})}^{1-\theta}\|f\|_{L^q(\RR^{m})}^\theta
\end{align*}
for every $f:\RR^{m}\rightarrow\RR$ with the following properties: (1) $f$ is $s$ times differentiable; (2) $\|\nabla^s f\|_{L^p(\RR^{m})}$, $\|f\|_{L^q(\RR^{m})}<\infty$ ; and (3) $f$ vanishes at infinity.     
\end{lemma}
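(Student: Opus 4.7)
The plan is to prove the Gagliardo–Nirenberg inequality via scaling, the Gagliardo–Loomis–Whitney endpoint, and an iterative bootstrap. First, I would apply the inequality to the dilate $f_\lambda(x) = f(\lambda x)$ and compare powers of $\lambda$: the left side scales as $\lambda^{k - m/r}$ and the right as $\lambda^{(1-\theta)(s - m/p) - \theta m/q}$, and matching these forces exactly the hypothesized exponent identity. This both confirms that the identity is necessary and pins down how the interpolation must be assembled below.

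Next I would establish the base case $s = 1$, $k = 0$. The key endpoint is the Gagliardo–Sobolev inequality $\|f\|_{L^{m/(m-1)}(\RR^m)} \leq C \|\nabla f\|_{L^1(\RR^m)}$, proved by bounding $|f(x)|$ along each coordinate axis by $\int_\RR |\partial_i f|\, dt$, taking the geometric mean over $i = 1, \dots, m$, and applying the Loomis–Whitney inequality on the resulting product of $(m-1)$-dimensional integrals. Hölder interpolation of this endpoint against $\|f\|_{L^q}$, together with the standard trick of applying the endpoint to $|f|^\gamma$ for a well-chosen exponent $\gamma$ to upgrade $L^1$ to general $L^p$, then yields $\|f\|_{L^{r_0}} \leq C \|\nabla f\|_{L^p}^{1-\theta}\|f\|_{L^q}^\theta$ for all admissible $(p, q, \theta)$ in the $s = 1$, $k = 0$ case.

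I would then bootstrap to general $(s, k)$ by induction. Applying the base case to $\nabla^k f$ with a carefully chosen set of auxiliary exponents, and combining with a complementary inductive estimate on $\nabla^{k-1} f$, gives mixed-derivative bounds; selecting the interpolation parameter at each step so that the powers of $\|\nabla^s f\|_{L^p}$ and $\|f\|_{L^q}$ telescope into the correct $(1-\theta, \theta)$ pair completes the induction. The regime $1/r \leq 0$, where $\|\nabla^k f\|_{L^r}$ must be interpreted as a homogeneous H\"older seminorm $\|\nabla^k f\|_{\dot C^{-m/r}}$, is handled by combining the $L^\infty$ endpoint of the inequality with the fractional Morrey embedding.

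The main obstacle I expect is managing the full admissible parameter range uniformly — in particular the boundary cases $\theta = 0$ (pure Sobolev embedding), $\theta = 1 - k/s$ (pure H\"older-type embedding), and the endpoints $p, q \in \{1, \infty\}$ — without resorting to a separate ad hoc argument for each. Extending the bootstrap cleanly across the $1/r = 0$ threshold into the H\"older regime, and verifying that the vanishing-at-infinity hypothesis is strong enough to justify the fundamental-theorem-of-calculus step in the endpoint slicing and the iterated integration by parts in the inductive upgrade, are the other delicate technical points.
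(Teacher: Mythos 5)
The paper does not prove this lemma at all: it is imported verbatim as a known result, citing Theorem 12.87 of Leoni's \emph{A First Course in Sobolev Spaces}, and the surrounding text only discusses the history of the inequality and the size of the constant. Your proposal is therefore a genuinely different route in the strongest sense --- you are offering to re-derive the cited black box from first principles. The outline you give is the classical one (and is essentially the proof strategy behind the very theorem the paper cites): the dilation argument correctly recovers the exponent identity $\frac{1}{r}=(1-\theta)\bigl(\frac{1}{p}-\frac{s-k}{m}\bigr)+\theta\bigl(\frac{1}{q}+\frac{k}{m}\bigr)$ as a necessary condition; the $s=1$, $k=0$ base case via the slicing/Loomis--Whitney proof of $\|f\|_{L^{m/(m-1)}}\leq C\|\nabla f\|_{L^{1}}$ plus the $|f|^{\gamma}$ trick is standard and correct; and the induction on $(s,k)$ using intermediate-derivative estimates is the accepted path. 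What the paper's approach buys is brevity and safety: the full GN statement has delicate exceptional parameter combinations (visible in the ``if and only if at least one of the following is false'' clause of the fractional version, \pref{lem:gn-ineq-f}), and the regime $1/r\leq 0$ --- which is exactly the one the paper uses, since the application takes $r=\infty$ --- requires reinterpreting the left-hand norm, as you note. What your approach would buy is self-containedness, but at the cost of a chapter-length argument whose endpoint bookkeeping (the $\theta=0$ and $p,q\in\{1,\infty\}$ boundaries, and the crossing into the H\"older scale) is precisely where errors creep in; since the paper ultimately needs the \emph{fractional} refinement of \citet{brezis2019sobolev} anyway, citing rather than reproving is the sensible choice here, and your sketch should be read as a correct map of the cited proof rather than a replacement the paper is missing.
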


The Gagliardo–Nirenberg (GN) interpolation inequality was originally proposed independently by Emilio Gagliardo and Louis Nirenberg in 1958, but the generality of the statement, as well as the details of the proof have been subject to refinement until recently. The version presented in \pref{lem:gn-ineq} is based on Theorem 12.87 in \citet{leoni2017first}. The GN interpolation inequality has found numerous applications in various branches of mathematics, including the theory of elliptic and parabolic partial differential equations, harmonic analysis, and functional analysis.

The GN inequality will be central to our proof of \pref{thm:is-alpha-smooth}. To see this, let $g:\RR^{m}\rightarrow\RR$ be a function defined on all of $\RR^{m}$ that is $\alpha$-smooth with $\alpha\geq 1$ and vanishes at $\infty$. Setting $p, r=\infty$, $s=\lfloor\alpha\rfloor$, $k=0$, $q=1$ in \pref{lem:gn-ineq}, we immediately obtain $\theta=\frac{\lfloor\alpha\rfloor}{m+\lfloor\alpha\rfloor}$ and a bound of the form:
\begin{align}\label{eq:GN-app}
    \|g\|_\infty \leq C_{\lfloor\alpha\rfloor, m, 0, \infty, 1, \infty} L^{\frac{m}{m+\lfloor\alpha\rfloor}}\|g\|_{L^1(\RR^{m})}^{\frac{\lfloor\alpha\rfloor}{m+\lfloor\alpha\rfloor}}
\end{align}
\begin{remark}[GN Inequality Constant]
    In \pref{eq:GN-app}, the GN constant $C_{\lfloor\alpha\rfloor, m, 0, \infty, 1, \infty}$ depends on $p, r=\infty$ which can potentially lead to a large constant value. Finding the best values for the GN constant is an area of active research with several recent results finding tight values for these constants in specific scenarios. For example, \citet{liu2017best} find that for $s=1$ (i.e. $\lfloor\alpha\rfloor=1$) the constant is $C_{1, m, 0, \infty, 1, \infty}=1$ (see Eq. 1.20 from their paper). We assume that the GN constant for $s>1$ will similarly have values that are not too large. 
\end{remark}

Two challenges arise when applying \pref{lem:gn-ineq} directly to our scenario: (1) the GN inequality applies to functions over the entire $\RR^{m}$, while our functions are defined on a bounded domain, and (2) the bound lacks precision for non-integer $\alpha$ values, requiring an adjustment similar to \pref{thm:is-alpha-smooth}. Traditionally, the first issue has been mitigated using linear extension operators for domains with Lipschitz boundaries (see \citet{stein1970singular}). The second challenge has been fully resolved only recently, through the work of Ha"im Brezis and Petru Mironescu \citep{brezis2018gagliardo, brezis2019sobolev}. This result relies on the Sobolev norm, which we define here for $\alpha$-smooth functions:

\begin{definition}[Sobolev norm of $\alpha$-smooth functions]
    For two integers $s\in \NN$ and $p\in \NN^+$, the Sobolev norm $\|\cdot\|_{W^{s,p}}$ of an $s$-smooth function defined on $\Dcal$ is defined as:
    \begin{align*}
        \|f\|_{W^{s,p}(\Dcal)}=\sum_{j=0}^s \|\nabla^j f\|_{L^p(\Dcal)}
    \end{align*}
    For $\alpha$-smooth functions with a potentially non-integer $\alpha\in(0, \infty)$, the fractional Sobolev norm for the $(\alpha, \infty)$ indices is given by:
    \begin{align*}
        & \|f\|_{W^{\alpha,\infty}(\Dcal)}=\|f\|_{W^{\lfloor \alpha\rfloor,\infty}(\Dcal)}+\max_{\beta:|\beta|=\lfloor \alpha\rfloor}\left|D^\beta f\right|_{C^{0, \{\alpha\}}}
    \end{align*}
\end{definition}

We now present the GN inequality for fractional smoothness from \citet{brezis2019sobolev}.

\begin{lemma}\label{lem:gn-ineq-f}[Fractional GN inequality, Theorem 1 in \citet{brezis2019sobolev}]
    Let $\Dcal$ be $\RR^{m}$, a half space or a Lipschitz bounded domain in $\RR^{m}$, and let $s_1,s_2,r,p_1, p_2, q, \theta, m$ satisfy:
    \begin{align*}
        & s_1,s_2,r\in \RR^+, p_1, p_2, p\in \NN, 1\leq p_1, p_2, p\leq \infty,\\
        & r<s:=\theta s_1+(1-\theta)s_2, \text{ and } \\
        & \frac{1}{q} = \left(\frac{\theta}{p_1}+\frac{1-\theta}{p_2}\right) - \frac{s-r}{m}.
    \end{align*}
    Then, the following inequality holds 
    \begin{align*}
        \|f\|_{W^{r,q}(\Dcal)}\leq C_{m, r, q, s_1,s_2,p_1,p_2} \|f\|^\theta_{W^{s_1,p_1}(\Dcal)}\|f\|^{1-\theta}_{W^{s_2,p_2}(\Dcal)}
    \end{align*}
    for every $f:\Dcal\rightarrow\RR$ with $\|f\|_{W^{s_1,p_1}(\Dcal)}, \|f\|_{W^{s_2,p_2}(\Dcal)}<\infty$ if and only if at least one of the following statements is false:
    $
    \begin{cases}
        s_2\in \NN \text{ and } s_2\geq 1, \\
        p_2=1, \\
        0<s_2-s_1\leq 1-\frac{1}{p_1}. 
    \end{cases}
    $.\\
    Here $C_{m, r, q, s_1,s_2,p_1,p_2}$ is a constant that depends only on $m, r, q, s_1,s_2,p_1,p_2$ and $\|f\|_{W^{s,p}(\Dcal)}$ is the fractional Sobolev norm.
\end{lemma}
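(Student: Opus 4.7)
The plan follows the Brezis–Mironescu strategy: reduce the fractional inequality on the (possibly bounded) domain $\Dcal$ to a family of integer-order GN inequalities on $\RR^m$ via extension plus Littlewood–Paley decomposition. First, when $\Dcal$ is a half-space or a Lipschitz bounded domain, I would invoke a Stein-type total extension operator $E : W^{s,p}(\Dcal) \to W^{s,p}(\RR^m)$ that is continuous for every $(s,p)$ in the relevant range, including fractional $s$. This lets us assume $\Dcal = \RR^m$ without loss of generality, at the cost of constants that depend on $\Dcal$ only through its Lipschitz character; it also explains why the three allowed geometries ($\RR^m$, half-space, Lipschitz bounded) appear together in the statement.

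Second, I would characterize $W^{s,p}(\RR^m)$ for non-integer $s$ via the intrinsic Slobodeckij seminorm, or equivalently as the Besov space $B^{s}_{p,p}$, and introduce a Littlewood–Paley decomposition $f = \sum_{j \geq 0} \Delta_j f$ whose blocks have Fourier support in dyadic annuli. The $W^{s,p}$ norm is then comparable to the weighted sequence norm $\|(2^{js}\|\Delta_j f\|_{L^p})_j\|_{\ell^p}$. On each frequency block, Bernstein's inequality trades integrability for a power of $2^j$, and the classical integer GN inequality from \pref{lem:gn-ineq} applies directly. I would then interpolate the three norms $W^{s_1,p_1}$, $W^{s_2,p_2}$, $W^{r,q}$ by summing in $j$ with Hölder's inequality and the assumed balancing relation $1/q = (\theta/p_1 + (1-\theta)/p_2) - (s-r)/m$. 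The strict inequality $r < s$ is used to guarantee geometric decay of the $j$-sum, which is where the whole scheme converges.

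The hardest part is the ``only if'' direction, which identifies the sharp failure regime where $p_2 = 1$, $s_2 \in \NN^+$, and $0 < s_2 - s_1 \leq 1 - 1/p_1$ all hold simultaneously. In that regime the $L^1$-endpoint of Sobolev embedding is just barely missed, so the Bernstein and Littlewood–Paley estimates above cannot be summed. Here the plan is to construct explicit counterexamples in the style of Brezis–Mironescu: either sequences of bump functions with carefully tuned concentration scalings, or sums of widely separated translates of a fixed profile $\varphi$, designed so that $\|f_n\|_{W^{s_1,p_1}}^{\theta}\|f_n\|_{W^{s_2,p_2}}^{1-\theta} \to 0$ while $\|f_n\|_{W^{r,q}}$ stays bounded away from zero. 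The combinatorics of translates exploits the fact that the $L^1$ norm is additive on disjoint supports, whereas higher integer-order $W^{s_2,1}$ norms aggregate differently, breaking any putative universal constant.

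Finally, because the inequality is homogeneous under the scaling $f(x) \mapsto f(\lambda x)$ precisely when the balancing relation on $q$ holds, no additional dimensional analysis is needed; the constant $C_{m, r, q, s_1, s_2, p_1, p_2}$ absorbs only quantities depending on the listed parameters (and implicitly on the Lipschitz character of $\Dcal$ introduced by the extension step). The main obstacle throughout is the sharp threshold: both the positive direction and the counterexample construction hinge on whether frequency-block estimates can be summed in the $L^1$ endpoint regime, and it is exactly this delicate accounting that distinguishes the Brezis–Mironescu result from earlier, non-sharp versions of the fractional GN inequality.
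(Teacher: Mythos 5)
First, a framing point: the paper does not prove this lemma at all --- it is imported verbatim as Theorem~1 of \citet{brezis2019sobolev} and used as a black box in the proof of \pref{thm:is-alpha-smooth}. So there is no internal proof to compare against; what you have written is a plan for reproving a substantial piece of the functional-analysis literature. As a roadmap it is broadly faithful to the Brezis--Mironescu strategy (Stein-type extension to reduce to $\RR^m$, a frequency decomposition to interpolate between the three Sobolev scales, and explicit counterexamples for the exceptional regime), but it is a plan rather than a proof, and the deferred steps are exactly the hard ones.

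The most concrete gap is in your positive-direction argument. You claim the Littlewood--Paley/Bernstein/H\"older scheme closes because ``$r < s$ guarantees geometric decay of the $j$-sum.'' But the exceptional case ($s_2 \in \NN^+$, $p_2 = 1$, $0 < s_2 - s_1 \leq 1 - 1/p_1$) also satisfies $r < s$, and there the inequality \emph{fails}; so whatever summation argument you run cannot be uniformly valid, and your sketch gives no indication of where it breaks down or why it survives everywhere else. That accounting is the entire content of the theorem. Relatedly, the identification $W^{s,p} \simeq B^s_{p,p}$ you lean on holds only for non-integer $s$ (for integer $s$ one has the Triebel--Lizorkin space $F^s_{p,2}$ instead), and the endpoints $p = 1$ and $p = \infty$ --- the latter being precisely the case the paper invokes, with $p_1 = 1$, $p_2 = q = \infty$ --- are where the Littlewood--Paley characterizations are most delicate. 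The ``only if'' direction is likewise only gestured at; the translate/concentration constructions need the scalings tuned to the specific triple $(s_1,p_1),(s_2,p_2),(r,q)$, and nothing in the sketch verifies that the proposed families actually violate the inequality. In short: the outline points at the right literature, but none of the steps that distinguish the sharp fractional result from the classical integer-order \pref{lem:gn-ineq} are actually carried out.
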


We are now ready to provide a proof for \pref{thm:is-alpha-smooth}. 

\begin{proof}[\pfref{thm:is-alpha-smooth}] Let $g(a):=\EE_{s\sim \rho}[f(s,a)]$. Since $f(s,a)$ is $\alpha$-smooth in the action $a$ with $\alpha\in(0, \infty)$ and $\|f\|_{C^\alpha}\leq L$, it immediately follows that $g(a)$ is also $\alpha$-smooth with $\|g\|_{C^\alpha}\leq L$. Furthermore, since $\Acal$ is a Lipschitz domain, we can apply \pref{lem:gn-ineq-f} directly by setting the following parameters: $q=\infty, r=0, p_1=1, s_1=0, p_2=\infty, s_2=\alpha$. This yields a coefficient $\theta=\frac{\alpha}{m+\alpha}$ and 
\begin{align}\label{eq:bound-proof-1}
    \|g\|_{W^{0,\infty}(\Acal)}\leq C_{m, 0, \infty, 0,\alpha,1,\infty} \|g\|^{\frac{\alpha}{m+\alpha}}_{W^{0,1}(\Acal)}\|g\|^{\frac{m}{m+\alpha}}_{W^{\alpha,\infty}(\Acal)}
\end{align}
We note that the conditions $\|g\|_{W^{s_1, p_1}(\Acal)}=\|g\|_{W^{0, 1}(\Acal)}<\infty$, $\|g\|_{W^{s_2, p_2}(\Acal)}=\|g\|_{W^{\alpha, \infty}(\Acal)}<\infty$ hold from the properties of $\alpha$-smooth functions (\pref{def:a-smooth}). In particular, $\|g\|_{W^{\alpha, \infty}(\Acal)}=\sum_{j=0}^{\lfloor\alpha\rfloor} \|\nabla^j g\|_{L^p(\Dcal)}+\max_{\beta:|\beta|=\lfloor \alpha\rfloor}\left|D^\beta g\right|_{C^{0, \{\alpha\}}}\leq (\lfloor\alpha\rfloor+2)L$. Absorbing all constants into one constant $C$ and rearranging the terms, we can rewrite \pref{eq:bound-proof-1} as:
\begin{align*}
    \|g\|_\infty\leq C L^{\frac{\alpha}{m+\alpha}}\|g\|^{\frac{\alpha}{m+\alpha}}_{L^1(\Acal)}
\end{align*}
We further write $\|g\|_{L^1(\Acal)}=\int_\Acal g(a)d\mu(a)=\mu(\Acal)\int_\Acal g(a)\frac{1}{\mu(\Acal)}d\mu(a)=\EE_{a\sim \text{unif}_{\Acal}}g(a)$ since $\mu(\Acal)=1$. Putting everything together, we obtain following inequality:
\begin{align*}
     \sup_{a \in \Acal} g(a) \leq C L^{\frac{m}{m+\alpha}} \left[\EE_{a \sim \unif_\Acal} g(a)\right]^\frac{\alpha}{m+\alpha} 
\end{align*}
The desired conclusion immediately follows by noting that $\EE_{s\in\rho, a\sim \pi(\cdot\mid s)}[f(s,a)]\leq \EE_{s\sim \rho}[\sup_{a\in \Acal}f(s,a)]=\sup_{a\in \Acal}g(a)$ and $\EE_{s\sim\rho, a\sim \unif_\Acal}[f(s,a)]=\EE_{a\sim \unif_\Acal}\EE_{s\sim\rho}[f(s,a)]=\EE_{a\sim \unif_\Acal}g(a)$. This concludes our proof.
\end{proof}

\begin{remark}[{Generality of $\Acal=\left[0,1\right]^n$}]
    We note that the result in \pref{thm:is-alpha-smooth} is scale-invariant. That is, if we consider $\Acal=\left[0,1\right]^n$ to be the scaling of a larger space $\Acal'=\left[0,A_{max}\right]^n$ with $f':\Acal'\rightarrow \RR$ such that $f'(a')=f(a'/A_{max})$, the bound in \pref{thm:is-alpha-smooth} is the same across both $\Acal$ and $\Acal'$. Intuitively, this is because the bound on the derivatives increases as the space shrinks, but the measure of the space decreases proportionally. Let $L'$ be bound in \pref{def:a-smooth} of $f'$. Then $L=\frac{L'}{A_{max}^\alpha}$. Noting that $\sup_{a \in \Acal} f(a)=\sup_{a' \in \Acal'} f(a')$, we have:
    \begin{align*}
        \sup_{a \in \Acal} f(a) &\leq C L^{\frac{m}{m+\alpha}} \left[\EE_{a \sim \unif_\Acal} f(a)\right]^\frac{\alpha}{m+\alpha}\\
        \Rightarrow \sup_{a' \in \Acal'} f(a') & \leq C \left(\frac{L'}{A_{max}^\alpha}\right)^{\frac{m}{m+\alpha}} \left(
        \mu(\Acal')\int_{\Acal'}f'(a')\frac{1}{\mu(\Acal')}d\mu(a')\right)^{\frac{\alpha}{m+\alpha}}\\
        & = C (L')^{\frac{m}{m+\alpha}} (A_{max})^{-\frac{\alpha m}{m+\alpha}} (A_{max})^{\frac{m\alpha}{m+\alpha}}\left[\EE_{a' \sim \unif_\Acal'} f'(a')\right]^\frac{\alpha}{m+\alpha}\\
        & = C (L')^{\frac{m}{m+\alpha}} \left[\EE_{a' \sim \unif_\Acal'} f'(a')\right]^\frac{\alpha}{m+\alpha}
    \end{align*}
    Thus, as long as $\Acal$ is bounded, the result in \pref{thm:is-alpha-smooth} holds in all generality.
\end{remark}

\subsection{\pfref{lem:policy-smoothing}}\label{sec:policy-smoothing-lem-proof}

Consider any policy $\pi(a\mid s)$. The core of this proof relies on the construction of a ``smoothed", intermediary policy $\pi_K(a\mid s)$. We define $\pi_K$ as follows:

\begin{definition}\label{def:pi-K-def}[Intermediary policy]
    Let $\pi(\cdot \mid s)\in \Delta(\Acal)$ be a given policy that we wish to evaluate. Then, for all $s\in \Scal$ and $K\in \RR^+$, define the intermediary policy $\pi_K(\cdot\mid s)\in \Delta(\Acal)$ as the distribution that samples $a\in \Acal$ by first sampling $a'\sim \pi(\cdot\mid s)$, and then $a \sim \unif_{B_\infty(a',K^{-1/m}/2) \cap \Acal}$, where $B_\infty(a',K^{-1/m}/2)=\{a\in \RR^{m} : \|a-a'\|_\infty\leq K^{-1/m}/2\}$ is the $\ell_\infty$ ball of radius $K^{-1/m}/2$ in $\RR^m$. 
\end{definition}

In other words, the intermediary policy $\pi_K$ is given by sampling in an $m$-dimensional hypercube of side length $K^{-1/m}$ centered at action $a'$ sampled from the original policy $\pi$. $\pi_K$ is trivially continuous in the action $a$. Furthermore, since the volume of the hypercube is $1/K$, $\pi_K$ has a density bounded by $K$. We now introduce a result that describes the behavior of $\EE_{a\sim \pi_K(\cdot \mid s)}[f(a)]$, where $f(a)$ is an $\alpha$-H\"older continuous function. 

\begin{lemma}\label{lem:pi-K-exp}
    Let $C^{\alpha, L}$ be the class of $\alpha$-H\"older continuous functions $f:\Acal\rightarrow [0,1]$ with $\alpha\in(0, 1]$ and norm at most $L$ as in \pref{def:a-smooth}. Furthermore, let $\pi(\cdot \mid s)\in \Delta(\Acal)$ be a given policy and let $\pi_K(\cdot \mid s)$ be its intermediary policy as in \pref{def:pi-K-def}. Then, the following inequality holds:
    \begin{align*}
        \sup_{f\in C^{\alpha, L}} \left(\EE_{a\sim \pi_K(\cdot\mid s)}[f(a)] - \EE_{a\sim \pi(\cdot\mid s)}[f(a)]\right) \leq \sqrt{m}LK^{-\frac{\alpha}{m}}.
    \end{align*}
\end{lemma}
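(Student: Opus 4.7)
The plan is a direct pointwise bound, using the tower property, H\"older continuity, and a conversion between $\ell_\infty$ and $\ell_2$ norms. Specifically, by the definition of $\pi_K$ given in \pref{def:pi-K-def}, we can write the expectation under $\pi_K$ as a nested expectation, first over $a' \sim \pi(\cdot \mid s)$, then over $a$ sampled uniformly on $B_\infty(a', K^{-1/m}/2) \cap \Acal$. Subtracting $\EE_{a \sim \pi(\cdot \mid s)}[f(a)] = \EE_{a' \sim \pi(\cdot \mid s)}[f(a')]$ and combining via linearity gives the representation
\begin{equation*}
\EE_{a \sim \pi_K(\cdot \mid s)}[f(a)] - \EE_{a \sim \pi(\cdot \mid s)}[f(a)] = \EE_{a' \sim \pi(\cdot \mid s)} \EE_{a \sim \unif_{B_\infty(a', K^{-1/m}/2) \cap \Acal}} [f(a) - f(a')] \,.
\end{equation*}

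Next, I would bound the innermost integrand pointwise. For any $a$ in the $\ell_\infty$ ball of radius $K^{-1/m}/2$ around $a'$, the standard norm conversion gives $\|a - a'\|_2 \leq \sqrt{m}\, \|a - a'\|_\infty \leq \tfrac{\sqrt{m}}{2} K^{-1/m}$. Applying the $\alpha$-H\"older property of $f$ then yields
\begin{equation*}
|f(a) - f(a')| \leq L \|a - a'\|_2^{\alpha} \leq L \left(\tfrac{\sqrt{m}}{2}\right)^{\alpha} K^{-\alpha/m} \,.
\end{equation*}
Since $\alpha \in (0,1]$ and $m \geq 1$, we have $(\sqrt{m}/2)^\alpha \leq \sqrt{m}$ (because $m^{\alpha/2} \leq m^{1/2}$ and $2^{-\alpha} \leq 1$), so the pointwise bound simplifies to $\sqrt{m}\, L K^{-\alpha/m}$.

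Finally, since this bound is uniform over $a'$ and over the uniform averaging region, taking the nested expectation and then the supremum over $f \in C^{\alpha, L}$ preserves the inequality, giving the claim. The main obstacle, such as it is, is simply being careful about the norm conversion factor $\sqrt{m}$ and the fact that $\alpha \leq 1$ lets us absorb the $2^{-\alpha}$ and the $m^{\alpha/2}$ factors cleanly into the target constant $\sqrt{m}$; the intersection with $\Acal$ in the ball definition only shrinks the averaging region and does not affect the pointwise bound, so no additional care is needed near the boundary.
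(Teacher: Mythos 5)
Your proposal is correct and follows essentially the same route as the paper's proof: write $\EE_{a\sim\pi_K}[f(a)]$ as a nested expectation over $a'\sim\pi(\cdot\mid s)$ and $a$ uniform on the $\ell_\infty$ ball, then bound $|f(a)-f(a')|\leq L\|a-a'\|_2^\alpha\leq L(\sqrt{m}\|a-a'\|_\infty)^\alpha\leq\sqrt{m}LK^{-\alpha/m}$ pointwise. Your handling of the $(\sqrt{m}/2)^\alpha\leq\sqrt{m}$ absorption is in fact slightly more explicit than the paper's.
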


\begin{proof}[\pfref{lem:pi-K-exp}]
   For $\alpha$-H\"older continuous functions with H\"older norm bounded by $L$ we have that $|f(a)-f(a')|\leq L \|a-a'\|_2^\alpha$ for any $a, a'\in \Acal$. This allows us to bound the difference in expectations under a given policy $\pi(\cdot \mid a)$ and the corresponding intermediary policy $\pi_K(\cdot\mid a)$:
    \begin{align*}
        &\sup_{f\in C^{\alpha, L}} \left(\EE_{a\sim \pi_K(\cdot\mid s)}[f(a)] - \EE_{a\sim \pi(\cdot\mid s)}[f(a)]\right) \\
        & \quad = \sup_{f\in C^{\alpha, L}} \left(\EE_{a'\sim \pi(\cdot\mid s)}\left[\EE_{a\sim \unif_{B_{\Acal}(a', K^{-1/m})\cap \Acal}}f(a)\right] - \EE_{a\sim \pi(\cdot\mid s)}[f(a)]\right)\\
        & \quad \leq \sup_{f\in C^{\alpha, L}} \left(\EE_{a'\sim \pi(\cdot\mid s)}\left[\EE_{a\sim \unif_{B_{\Acal}(a', K^{-1/m})\cap \Acal}}[f(a')+\sqrt{m}LK^{-\alpha/m}]\right] - \EE_{a\sim \pi(\cdot\mid s)}[f(a)]\right)=\sqrt{m}LK^{-\alpha/m}
    \end{align*}
    where the inequality comes from $|f(a)-f(a')|\leq L \|a-a'\|_2^\alpha\leq L(\sqrt{m}\|a-a'\|_\infty)^\alpha \leq \sqrt{m} LK^{-\alpha/m}$ for $a, a'\in B_\infty(a', K^{-1/m}/2)\cap \Acal$. 
\end{proof}

\begin{remark}
    For $\alpha>1$, we cannot easily leverage the additional smoothness properties of the class $C^{\alpha, L}$. To understand this, let $m=1$, $r\ll 1$, and consider the Taylor expansion of $f(a')$ around $f(a)$ such that $\|a-a'\|_2=r$. The Taylor expansion contains a term that is bounded by $Lr$ which is of leading order given $r\ll 1$. In other words, for small $r$, functions of smoothness $\alpha\geq1$ are locally linear with derivatives bounded by $L$. Thus, for $\alpha>1$, we cannot obtain a bound better than $Lr$ using similar arguments as above. Note that $Lr$ is still a valid bound in this regime since the functions are still Lipschitz with constant $L$. 
\end{remark}

Using the result of \pref{lem:pi-K-exp}, we can obtain the conclusion of \pref{lem:policy-smoothing} by applying the performance difference lemma, and bounding the difference between $V(\pi; R, \tilde \Mcal)$ and $V(\pi_K; R, \tilde \Mcal)$ for any model $\tilde \Mcal$ allowed by $(\Phi,\Psi)$, where $\pi_K$ is the intermediary policy of $\pi$ (\pref{def:pi-K-def}). We give details below:
\begin{align*}
    |V(\pi; R, \tilde \Mcal) - V(\pi_K; R, \tilde \Mcal)| &\leq\sum_{h=0}^{H-1} \EE_{s_h \sim \PP_h^{\pi_K}} \Big| \EE_{a_h \sim \pi(s_h)}[Q^{\pi, \tilde\Mcal}(s_h, a_h)] - \EE_{a_h \sim \pi_K(s_h)}[Q^{\pi, \tilde\Mcal}(s_h, a_h)] \Big| \\
    &\leq \sum_{h=0}^{H-1} \EE_{s_h \sim \PP_h^{\pi_K}} \Big| \EE_{a_h \sim \pi(s_h)}[R_h(s_h, a_h)] - \EE_{a_h \sim \pi_K(s_h)}[R_h(s_h, a_h)] \\
    &\qquad\qquad\qquad\quad +  \EE_{a_h \sim \pi(s_h)}  \EE\left[ V_{h+1}^{\pi, \tilde\Mcal}(s_{h+1})\mid s_h, a_h, \tilde\Mcal \right] \\
    &\qquad\qquad\qquad\quad -  \EE_{a_h \sim \pi_K(s_h)}  \EE\left[ V_{h+1}^{\pi, \tilde\Mcal}(s_{h+1})\mid s_h, a_h, \tilde\Mcal \right] \Big| \,.
\end{align*}

Now, by assumption $R_h$ is $\alpha$-H\"older continuous in $a$ with H\"older norm at most $L$. In addition, $R$ is a sparse reward so $V^{\pi, \tilde\Mcal}_{h+1} : \Scal \mapsto [0,1]$, which along with the assumption that $\phi_h^*(s,a)\psi^*_h(\cdot)$ is also $\alpha$-H\"older continuous under the total variation distance, implies that $\EE\left[ V_{h+1}^{\pi, \tilde\Mcal}(s_{h+1})\mid s_h, a_h, \tilde\Mcal \right]$ is also $\alpha$-H\"older continuous in $a$ with constant $L$. Therefore, we can apply the result of \pref{lem:pi-K-exp} to control the terms above and obtain the desired result:
\begin{align*}
    |V(\pi; R, \tilde \Mcal) - V(\pi_K; R, \tilde \Mcal)| &\leq \sum_{h=0}^{H-1} \EE_{s_h \sim \PP_h^{\pi_K}} \left(\sqrt{m}LK^{-\alpha/m}+ \sqrt{m}LK^{-\alpha/m}\right) \\
    &\leq 2\sqrt{m}H L K^{-\alpha/m} \,.
\end{align*}

\begin{remark}
    While the assumption of the theorem is that both $R$ and $\phi_h^*(s,a)\psi^*_h(\cdot)$ are  $\alpha$-H\"older continuous in $a$ with H\"older norm at most $L$, this is without loss of generality. To see this, assume instead that $R$ is $\alpha_R$-smooth with $\alpha_R\in(0,\infty)$ and H\"older norm $L_R$, and similarly $\phi_h^*(s,a)\psi^*_h(\cdot)$ is $\alpha_T$-smooth with $\alpha_T\in(0,\infty)$ and H\"older norm $L_T$. Then, we could simply set $\alpha:=\min(\alpha_R, \alpha_T, 1)$ and $L:=\max(L_R, L_T)$ and the result of \pref{lem:policy-smoothing} applies due to the nesting of $\alpha$-smooth classes.  
\end{remark}

\subsection{\pfref{lem:smooth-transitions}}

\begin{proof}
    For any given $h,\phi,\psi,s,a,a'$ we have
    \begin{align*}
        &|\textup{err}^{\phi, \psi}_h(s,a') - \textup{err}^{\phi, \psi}_h(s,a)| \\
        &\leq \Big\| \Big( \phi(s,a')^T\psi(\cdot) - \phi_h^*(s,a')^T\psi_h^*(\cdot) \Big) \\
        &\qquad - \Big( \phi(s,a)^T\psi(\cdot) - \phi_h^*(s,a)^T\psi_h^*(\cdot) \Big) \Big\|_{\textup{TV}} \\
        &\leq 2 \sup_{\phi \in \Phi, \psi \in \Psi} \Big\| \phi(s,a')^T\psi(\cdot) - \phi_h(s,a)^T\psi_h(\cdot) \Big\|_{\textup{TV}} \\
        &\leq 2 U \sup_{\phi \in \Phi} \| \phi(s,a') - \phi(s,a) \|_2 \\
        &\leq 2 U L_\Phi \|a' - a\|^\alpha \,,
    \end{align*}
    where the first inequality follows from the triangle inequality, the second follows from \pref{assum:realizability}, the third follows from H\"older's inequality and the assumd bound on $\Psi$, and the final inequality follows from H\"older continuity of all $\phi \in \Phi$. The conclusion immediately follows.
\end{proof}

\subsection{\pfref{thm:continous-pi-guarantee}}

\begin{proof}[\pfref{thm:continous-pi-guarantee}]

Since $m \geq 1$, we cannot simply use \pref{thm:is-alpha-smooth} to replace the simple importance sampling bound applied by \citet{agarwal2020flambe}, and conclude by simply replacing $|\Acal|$ in the result of \citet{agarwal2020flambe} with some complex term involving the constants $c$, $L_E$, $\alpha_E$ as in \pref{assum:alpha-smooth-errors}. Instead, we must carefully percolate the weaker bound given by \pref{thm:is-alpha-smooth} through the arguments of \citet[Appendix B]{agarwal2020flambe}. In this section, we briefly summarize this process.  

Let us refer to \citet[Lemma $x$]{agarwal2020flambe} by AKKS-L$x$, for each $x$, and similarly AKKS-C$x$ for corollaries. We will briefly describe how updated versions of of the lemmas and corollaries in \citet[Appendix~B]{agarwal2020flambe} can be obtained under our assumptions, by using \pref{thm:is-alpha-smooth} instead of the simple importance sampling bound. We sketch these arguments and results, with formal details omitted when the implications are straightforward. In these arguments we will assume without further comment that $\lambda \leq 1/d$. 

\paragraph{AKKS-L17}
First, we note that at termination we have
\begin{align*}
    &\sup_\pi \EE \left[ \phi_{H-1}(s_{H-1}, a_{H-1})^\top \Sigma_T^{-1} \phi_{H-1}(s_{H-1}, a_{H-1}) \mid \widetilde{\Mcal}, \pi \right]  \\
    & \leq  \sup_\pi \EE \left[ \phi_{H-1}(s_{H-1}, a_{H-1})^\top \Sigma_{T-1}^{-1} \phi_{H-1}(s_{H-1}, a_{H-1}) \mid \widetilde{\Mcal}, \pi \right] \\
    & \leq  \EE \left[ \phi_{H-1}(s_{H-1}, a_{H-1})^\top \Sigma_{T-1}^{-1} \phi_{H-1}(s_{H-1}, a_{H-1}) \mid \widetilde{\Mcal}, \pi_T \right] + \frac{\beta}{2} \\
    & \leq  \beta \,,
\end{align*}
where the first inequality follows because $\Sigma_T - \Sigma_{T-1}$ is positive semi-definite (PSD), the second equality follows because inequality follows due to the approximate optimization condition, and the third follows from the termination condition. Therefore, since $\Sigma_\rho  = \frac{1}{T} \sum_{t=1}^T \Sigma_{\pi_t}$ and $\Sigma_T = \sum_{t=1}^T \Sigma_{\pi_t} + I$, we have 
\begin{equation*}
    \forall \pi : \EE \left[ \phi_{H-1}(s_{H-1}, a_{H-1})^\top (\Sigma_\rho + I/T)^{-1} \phi_{H-1}(s_{H-1}, a_{H-1}) \mid \widetilde{\Mcal}, \pi \right]  \leq T \beta \,.
\end{equation*}

Finally, we note that since at every non-terminating iteration the objective is at least $\beta/2$, we can apply an identical potential-based argument as in the original proof of AKKS-L17, except with $\beta$ replaced with $\beta/2$, which gives us the bound
\begin{equation*}
    T \leq 8d \log(1 + 8/\beta) / \beta \,.
\end{equation*}
That is, by using our alternative elliptical planner that only approximately optimizes the objective each iteration, we obtain the same guarantee as in AKKS-L17, except the bound on $T$ is a constant-factor weaker.

\paragraph{AKKS-C5}
The statement and proof of this corollary is identical since no aspect of it relies on $\Acal$ being discrete.

\paragraph{AKKS-L6}
For this lemma, we can obtain a similar result, by relying on both \pref{assum:alpha-smooth-errors} and \pref{thm:is-alpha-smooth} instead of the simple importance sampling bound. We simplify notation by leting $U(\alpha_E, L_E):=cL_E^{\frac{m}{m+\alpha_E}}$ where $c$ is the constant in \pref{thm:is-alpha-smooth}. Thus, we can obtain the bound 
\begin{align*}
    &\EE_{s_h \sim P_h^\pi, a_h \sim \pi(s_h)}[\one\{s_h \neq s_{\textup{absorb}}\} \textup{err}(s_h,a_h) ] \\
    &\leq \EE_{s_h \sim P_h^\pi}\left[\one\{s_h \neq s_{\textup{absorb}}\} U(\alpha_E, L_E) \EE_{a_h \sim \unif_\Acal}[\textup{err}(s_h,a_h)]^{\alpha_E/(m+\alpha)} \right] \\
    &\leq U(\alpha_E, L_E) \EE_{s_h \sim P_h^\pi, a_h \sim \unif_\Acal}[\one\{s_h \neq s_{\textup{absorb}}\} \textup{err}(s_h,a_h) ]^{\alpha_E/(m+\alpha_E)} \\
    &\leq U(\alpha_E, L_E) \epstv^{\alpha_E/(2(m+\alpha_E))} \,,
\end{align*}
where the first inequality follows from \pref{assum:alpha-smooth-errors}, the second follows from Jensen's inequality, and the last following the same logic as in AKKS-L6. Therefore, following the remaining steps as in AKKS-L6, we obtain the same result except with $U(\alpha_E, L_E)\epstv^{\alpha_E/(2(m+\alpha_E))}$ in place of $K \sqrt{\epstv}$. 

\paragraph{AKKS-L7 and AKKS-C8 }
Plugging in the previous result from AKKS-L6, we obtain an almost-identical results, except with $K \sqrt{\epstv}$ replaced everywhere by $U(\alpha_E, L_E)\epstv^{\alpha_E/(2(m+\alpha_E))}$. In particular, this gives us a bound on the maximum number of rounds 
\begin{equation*}
    \jmax = \frac{4Hd}{\lambda U(\alpha_E, L_E) \epstv^{\alpha_E/(2(m+\alpha_E))}}  \log\left(1 + \frac{4H}{\lambda U(\alpha_E, L_E) \epstv^{\alpha_E/(2(m+\alpha_E))}} \right) \,,
\end{equation*}
in order to obtain a round where
\begin{equation*}
    \PP[(s_h,a_h) \notin \Kcal(\Sigma_{j,j}) \mid \pi, \widehat\Mcal^{(j)}] \leq 2 H U(\alpha_E, L_E) \epstv^{\alpha_E/(2(m+\alpha_E))}
\end{equation*}

\paragraph{AKKS-L9}
We can obtain a very similar lemma here, except that we need to interpret $K$ differently. Here, we will interpret $K$ as some constant such that the density ratio $\pi(a \mid s) / \unif_\Acal(a)$ is bounded by $K$ for every $s,a$. Them, the same result follows using an identical argument involving importance sampling, except that the value $K$ in the bound is $\pi$-dependent. We note that this implicitly requires $\pi$ to be non-deterministic.

\paragraph{AKKS-L10}
We will obtain a similar lemma here, except that the lemma will only apply to policies $\pi$ that have a bounded density ratio $\sup_{s,a} \pi(a \mid s) / \unif_\Acal(a) \leq K$ for some $K$, and the bounds will be $K$-dependent.

First, applying successively our updated versions of AKKS-L6, AKKS-L9, and AKKS-C8, we have
\begin{align*}
    &\PP[(s_h,a_h) \notin \Kcal(\Sigma_{j,j}) \mid \pi, \widehat\Mcal^{(j)}] \\
    &\leq \alpha H^2 K U(\alpha_E, L_E)\epstv^{\alpha_E/(2(m+\alpha_E))} + \frac{T \beta}{2 \alpha} + \frac{\alpha d}{2 T} + H U(\alpha_E, L_E)\epstv^{\alpha_E/(2(m+\alpha_E))} \overset{\Delta}{=} \epsescape \,.
\end{align*}

Then, tracing the rest of the proof with the updated version of AKKS-L6, we get a final result of
\begin{equation*}
    \max_{\pi, R} \left| V(\pi;R,\widehat{\Mcal}) - V(\pi;R,\Mcal) \right| \leq H U(\alpha_E, L_E)\epstv^{\alpha_E/(2(m+\alpha_E))} + H \epsescape \,.
\end{equation*}

\paragraph{Final steps.} We put together the updated lemmas and corollaries to obtain the final $K$-dependent bound on the sample complexity in order to accurately evaluate all policies $\pi$ that satisfy $\sup_{s,a} \pi(a \mid s) / \unif_\Acal(a) \leq K$. To simplify our notation, we introduce the following abbreviations: $\tau=m/\alpha_E$ and $\kappa=m/(m+\alpha_E)$. 

First, we note that from AKKS-10, the policy evaluation error is given by
\begin{align*}
    \textup{error} &= \alpha H^3 K U(\alpha_E, L_E)\epstv^{\alpha_E/(2(m+\alpha_E))} + \frac{H T \beta}{2 \alpha} + \frac{H \alpha d}{2 T} + (H^2 + H) U(\alpha_E, L_E)\epstv^{\alpha_E/(2(m+\alpha_E))}  \\
    & = \alpha H^3 K U(\alpha_E, L_E)\epstv^{1/(2(1+\tau))} + \frac{H T \beta}{2 \alpha} + \frac{H \alpha d}{2 T} + (H^2 + H) U(\alpha_E, L_E)\epstv^{1/(2(1+\tau))}  \\
    &\leq \alpha H^3 K U(\alpha_E, L_E)\epstv^{1/(2(1+\tau))} + \frac{H T \beta}{2 \alpha} + \frac{H \alpha d}{2 T} + 2H^2 U(\alpha_E, L_E)\epstv^{1/(2(1+\tau))}  \,,
\end{align*}
Next, optimizing over $\alpha$ in this bound and setting
\begin{equation*}
    \alpha = \sqrt{\frac{HT\beta}{2} \left( H^3 K U(\alpha_E, L_E)\epstv^{1/(2(1+\tau))} + \frac{H d}{2 T} \right)^{-1}}
\end{equation*}
gives us
\begin{equation*}
    \textup{error} \leq \sqrt{\left( 2 H^4 K T\beta U(\alpha_E, L_E)\epstv^{1/(2(1+\tau))} + H^2 d \beta \right)} + 2H^2 U(\alpha_E, L_E)\epstv^{1/(2(1+\tau))} \,.
\end{equation*}

Next, we note that $\beta$ is a free parameter of the algorithm, which we may set arbitrarily low at the cost of greater sample complexity. We balance the first term by setting $\beta$ according to
\begin{equation*}
    \beta = \frac{2 H^2 K U(\alpha_E, L_E)\epstv^{1/(2(1+\tau))}}{d} \,,
\end{equation*}
which gives us
\begin{equation*}
    \textup{error} \leq \sqrt{2} H^2 \sqrt{T \beta + 1} \sqrt{K U(\alpha_E, L_E) } \epstv^{1/(4(1+\tau))} + 2H^2 U(\alpha_E, L_E) \epstv^{1/(2(1+\tau))} \,.
\end{equation*}
We note that $U(\alpha_E, L_E)+\sqrt{U(\alpha_E, L_E)}\leq 2\max(U(\alpha_E, L_E),1)$. For now, assume that $U(\alpha_E, L_E)\geq 1$. We further apply the fact that $K \geq 1$, $\epstv \leq 1$, and $T\beta \leq 8d \log(1 + 8/\beta)$ in order to get
\begin{equation*}
    \textup{error} \leq 8 \sqrt{d} \sqrt{\log(1+8/\beta)} H^2 \sqrt{K} U(\alpha_E, L_E) \epstv^{1/(4(1+\tau))} \,.
\end{equation*}
Now, we would like to set $\epstv$ sufficiently small such that the error is less than $\epsilon$. We can achieve this if we set
\begin{equation*}
    \epstv = 8^{-4-4\tau} \frac{H^{-8-8\tau} U(\alpha_E,L_E)^{-4-4\tau} K^{-2-2\tau} d^{-2-2\tau} \epsilon^{4+4\tau}}{\log(1+8/\beta')^{2+2\tau}} \,,
\end{equation*}
as long as $\beta'$ satisfies
\begin{align*}
    &\qquad \beta' \leq \beta = \frac{2 H^{2} K U(\alpha_E, L_E) \epstv^{1/(2(1+\tau))}}{d} = \frac{\epsilon^2}{2^5 d^2 H^2 U(\lambda_T, L_E) \log(1 + 8/\beta')} \\
    &\iff 8/\beta' \geq  2^8 d^2 H^2 U(\alpha_E, L_E) \epsilon^{-2} \log(1+8/\beta') \,.
\end{align*}
We note that $\log(1 + x) \leq x/\sqrt{x+1}$ for $x\geq0$, so this condition on $\beta'$ is clearly satisfied if we set
\begin{equation*}
    \beta' = \frac{8}{2^{16} d^4 H^4 U^2(\alpha_E, L_E) \epsilon^{-4} - 1}\,.
\end{equation*}

Next, we need to reason about how small $\lambda$ and $n$ must be set to achieve this. Recall that
\begin{equation*}
    \epstv \leq \lambda d + \frac{2 \log(\jmax H |\Phi| |\Psi| / \delta)}{n} \,.
\end{equation*}
This implies that we can achieve the required bound as long as we set
\begin{align*}
    \lambda &= \frac{8^{-4-4\tau}}{2} \frac{H^{-8-8\tau} U(\alpha_E, L_E)^{-4-4\tau} K^{-2-2\tau} d^{-3-2\tau} \epsilon^{4+4\tau} }{\log(1+8/\beta')^{2+2\tau}}\\
    n &= 8^{4+4\tau} \frac{H^{8+8\tau} U(\alpha_E, L_E)^{4+4\tau} K^{2+2\tau} d^{2+2\tau} \log(1+8/\beta')^{2+2\tau}}{\epsilon^{4+4\tau} } \log(\jmax H |\Phi| |\Psi| / \delta)
\end{align*}

Furthermore, recalling the definition of $\jmax$ and disregarding the log terms, we have
\begin{align*}
    \jmax &= \frac{4Hd}{\lambda U(\alpha_E, L_E) \epstv^{1/(2(1+\tau))}}  \log\left(1 + \frac{4H}{\lambda U(\alpha_E, L_E) \epstv^{1/(2(1+\tau))}} \right) \\
    &= \tilde\Ocal \left(\frac{Hd}{\lambda U(\alpha_E, L_E) \epstv^{1/(2(1+\tau))}} \right) \\
    &= \tilde\Ocal \left( \frac{H^{13+8\tau} U(\alpha_E, L_E)^{5+4\tau} K^{3+2\tau} d^{5+2\tau}} {\epsilon^{6+4\tau} }\right)\,.
\end{align*}

Finally, the total sample complexity is given by $n \jmax H$. Plugging in the above values for $n$, $\lambda$, and $\epstv$, and disregarding log terms, we have
\begin{align}
    n \jmax H &= \tilde\Ocal \left( H^{22+16\tau} K^{5+4\tau} d^{7+4\tau} \left(\frac{1}{\epsilon}\right)^{10+8\tau} U(\alpha_E, L_E)^{9+8\tau} \ln\left(\frac{|\Phi||\Psi|}{\delta}\right)\right)  \,.
\end{align}
Furthermore, recalling that $U(\alpha_E, L_E)=cL_E^{\frac{m}{m+\alpha_E}}=cL_E^\kappa$, we obtain the desired sample complexity:
\begin{align}\label{eqn:final-result}
      n \jmax H &=\tilde\Ocal \left( H^{22+16\tau} K^{5+4\tau} d^{7+4\tau} \left(\frac{1}{\epsilon}\right)^{10+8\tau} L_E^{(9+8\tau)\kappa} \ln\left(\frac{|\Phi||\Psi|}{\delta}\right)\right)  \,.
\end{align}
\end{proof}

\subsection{\pfref{thm:continous-r-guarantee}}

\begin{proof}[\pfref{thm:continous-r-guarantee}]
    The proof of this theorem is an immediate application of \pref{thm:continous-pi-guarantee} for the smoothed, intermediary policy $\pi_K$ as in \pref{def:pi-K-def}. First, note that we can decompose the error of the policy $\pi$ in terms of errors of the policy $\pi_K$:
\begin{align*}
    |V(\pi; R, \Mcal) - V(\pi; R, \widehat\Mcal)| &\leq |V(\pi; R, \Mcal) - V(\pi_K; R, \Mcal)| \\
    &\quad + |V(\pi_K; R, \Mcal) - V(\pi_K; R, \widehat\Mcal)| \\
    &\quad + |V(\pi_K; R, \widehat\Mcal) - V(\pi; R, \widehat\Mcal)| 
\end{align*}
We can bound the first and third term on the RHS of the above inequality using the result of \pref{lem:policy-smoothing}. To see this, let $\sigma=m/\min(\alpha_T, \alpha_R)$ and $L=\max(L_T, L_R)$. Then, by setting $K=\left(\frac{8\sqrt{m}HL}{\epsilon}\right)^{\sigma}$, \pref{lem:policy-smoothing} yields:
\begin{align*}
    |V(\pi; R, \Mcal) - V(\pi; R, \widehat\Mcal)| & \leq |V(\pi_K; R, \Mcal) - V(\pi_K; R, \widehat\Mcal)| +\frac{\epsilon}{2}
\end{align*}
Since $\pi_K$ has bounded density, we can control the remaining term on the RHS by leveraging the result of \pref{thm:continous-pi-guarantee} with $K=\left(\frac{8\sqrt{m}HL}{\epsilon}\right)^{\sigma}$ and $\epsilon$ replaced with $\epsilon/2$. This would ensure that  $|V(\pi_K; R, \Mcal) - V(\pi_K; R, \widehat\Mcal)| \leq \epsilon / 2$ with probability at least $1-\delta$. Therefore, putting this together and plugging in this value of $K$ into our result from \pref{thm:continous-pi-guarantee}, we get our desired performance result of
\begin{equation*}
    |V(\pi; R, \Mcal) - V(\pi; R, \widehat\Mcal)| \leq \epsilon \,,
\end{equation*}
with a total number of trajectories collected given by 
\begin{align*}
    \tilde O &\Bigg( H^{22+16\tau+(4 \tau + 5)\sigma} d^{7+4\tau} \Big(\frac{1}{\epsilon}\Big)^{10+8\tau+(4 \tau + 5)\sigma} L_E^{(9 + 8 \tau)\kappa} L^{(4 \tau + 5)\sigma} \ln\Big(\frac{|\Phi||\Psi|}{\delta}\Big) \Bigg)\,,
\end{align*}
where $\sigma = m / \min(\alpha_T, \alpha_R)$, and $L = \max(L_T, L_R)$.
Furthermore, the corresponding values of the hyperparameters $\beta$, $n$, and $\jmax$ that achieve these results are the same as those from the proof of \pref{thm:continous-pi-guarantee}, plugging in the above value of $K$.
\end{proof}

\section{CASE STUDY: RAFFLE}
\label{app:raffle}

In this section, we apply our framework to RAFFLE \citep{cheng2023improved}, a recently proposed reward-free low-rank MDP algorithm which was shown to achieve near optimal sample complexity $\tilde{O}\left(\frac{H^3d^2K(d^2+K)}{\epsilon^2}\right)$ (Theorem 2 in \citet{cheng2023improved}). The significant improvement in sample complexity over FLAMBE is attributed to a novel exploration-driven pseudo-reward mechanism which targets sample collection towards high-error regions. 

We now seek to derive a similar sample complexity for the RAFFLE algorithm when the action space is continuous, specifically $\Acal = [0,1]^m$. We first note that the size $K$ of the discrete action space in RAFFLE enters the bound in their Theorem 2 via the application of importance sampling (IS) bounds, as in our \pref{lem:is}. The IS bound is applied with two different choices of $f(s,a)$: (1) the (squared) TV error of transition distributions (as in our \pref{assum:alpha-smooth-errors}), denoted by $f_h^{(n)}(s,a)$; and (2) their (squared) bonus reward function terms, given by $\hat{b}_h^{(n)}(s,a)$. Thus, in order to provide a similar bound, it suffices to make $f_h^{(n)}(s,a)$ and $\hat{b}_h^{(n)}(s,a)$ $\alpha$-smooth in $a$ and replace the IS bounds with the bound in our \pref{thm:is-alpha-smooth}. 

The TV error smoothness assumption is already outlined in \pref{assum:alpha-smooth-errors} which we take to hold for RAFFLE as well. Next, we see that the bonus reward function is the matrix norm of $\phi^{(n)}_h(s, a)$ induced by a symmetric matrix with eigenvalues smaller that $\lambda_n^{-1}$. Thus, we introduce the following assumption: 

\begin{assum}[$\alpha$-smooth feature vectors] \label{assum:alpha-smooth-features} There exists $\alpha_\Phi\in (0, \infty)$ such that the scalar components $\phi_i(s,a)$, $i \in [d]$ are $\alpha_\Phi$-smooth in $a\in \Acal$ with norm $L_\Phi$ for every $\phi \in \Phi$ and $s\in \Scal$. 
\end{assum}
\pref{assum:alpha-smooth-features} is similar to the assumption that enables smooth TV errors in \pref{lem:smooth-transitions}. However, \pref{assum:alpha-smooth-features} is more general in that it allows for higher degrees of smoothness when $\alpha_\Phi\geq 1$.

\begin{remark}
    RAFFLE applied IS bounds to $f(s,a)^2=\left\|\phi(s,a)^\top\psi(\cdot) - \phi_h^*(s,a)^\top \psi_h^*(\cdot)\right\|_{\textup{TV}}^2$ and $g(s,a)^2=\left\|\hat{\phi}_h^{(n)}(s,a)\right\|_U^2$, involving sums over $U_{ji} \hat{\phi}_{h,i}^{(n)}(s,a)\hat{\phi}_{h,j}^{(n)}(s,a)$. While \pref{assum:alpha-smooth-errors} and \pref{assum:alpha-smooth-features} only impose smoothness on $f(s,a)$ and each of the $\hat{\phi}_{h,i}^{(n)}(s,a)$'s, respectively, the space of $\alpha$-smooth functions as defined in \pref{def:a-smooth} is closed under multiplication \citep[Theorem 4.2 and Proposition 4.3]{behzadan2021multiplication}, making $f(s,a)^2$ $\alpha_E$-smooth and $\hat{\phi}_{h,i}^{(n)}(s,a)\hat{\phi}_{h,j}^{(n)}(s,a)$ $\alpha_\Phi$-smooth. In the remainder of this section, we extend our notation, denoting the $\alpha$-smooth norms of $f^2(s,a)$ and the $\hat{\phi}_{h,i}^{(n)}(s,a)\hat{\phi}_{h,j}^{(n)}(s,a)$ products by $L_E$ and $L_\Phi$, respectively.
\end{remark}

With \pref{assum:alpha-smooth-errors} and \pref{assum:alpha-smooth-features}, we can replace the IS bounds with the bound in our \pref{thm:is-alpha-smooth} and carefully propagate these arguments through sample complexity proof in \citet[Appendix A]{cheng2023improved}. In the remainder of this section, we sketch this process and results, with formal details omitted for brevity. 
We provide the sample complexity of the RAFFLE algorithm with continuous action spaces in the following theorem:

\begin{theorem}[RAFFLE guarantees for continuous $\Acal$]\label{thm:raffle-guarantee}
    Assume $\Mcal$ has rank $d$ and \pref{assum:realizability}, \pref{assum:alpha-smooth-errors}, and \pref{assum:alpha-smooth-features} hold. Let $\alpha=\min(\alpha_E, \alpha_\Phi)$, $L=\max(L_E, L_\Phi)$, and $\tau=m/(m+\alpha)$. Then, if $\tau<1/2$ and we set $\hat{\alpha}_n = \tilde{O}\left(\sqrt{n^{\tau}\left(L^{\tau} +(dL)^{\tau} + d^2\right)}\right)$, $\lambda_n=\tilde{O}(d)$, with probability at least $1-\delta$, RAFFLE returns a model $\widehat{\Mcal}$ that satisfies 
    \begin{equation*}
    \left| V(\pi;R,\widehat\Mcal) - V(\pi;R,\Mcal) \right| \leq \epsilon
    \end{equation*}
for every sparse reward function $R$, and every policy $\pi$. The total number of trajectories collected by RAFFLE is upper bounded by
\begin{equation*}
    \tilde{O}\left(H^{1+\textstyle\frac{2}{1-2\tau}}d^{\textstyle\frac{2}{1-2\tau}}\left(\frac{1}{\epsilon}\right)^{\textstyle\frac{2}{1-2\tau}}L^{\textstyle\frac{\tau}{1-2\tau}}(L^\tau + (Ld)^\tau+d^2)^{\textstyle\frac{1}{1-2\tau}}\right).
\end{equation*}
\end{theorem}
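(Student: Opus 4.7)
The plan is to follow the sample complexity proof of RAFFLE in \citet[Appendix A]{cheng2023improved} step by step, and at each invocation of the discrete-action importance sampling bound \pref{lem:is} that contributes a factor of $|\Acal|$, substitute our \pref{thm:is-alpha-smooth} instead. This yields bounds of the form $c L^\tau (\EE_{s,\unif_\Acal}[\cdot])^{1-\tau}$ in place of $|\Acal|\cdot \EE_{s,\unif_\Acal}[\cdot]$, with $\alpha=\min(\alpha_E,\alpha_\Phi)$, $L=\max(L_E,L_\Phi)$, and $\tau=m/(m+\alpha)$ as in the theorem statement.

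Before applying the substitution, I would verify the $\alpha$-smoothness of the two target functions. The first is the squared TV error $f_h^{(n)}(s,a)^2$, which by \pref{assum:alpha-smooth-errors} and closure of the $\alpha$-smooth class under multiplication is $\alpha_E$-smooth with norm $L_E$. The second is the squared bonus $\hat b_h^{(n)}(s,a)^2 = \hat\phi_h^{(n)}(s,a)^\top U \hat\phi_h^{(n)}(s,a)$, a quadratic form in the components of $\hat\phi_h^{(n)}(s,a)$ with coefficient matrix of operator norm at most $\lambda_n^{-1}$; by \pref{assum:alpha-smooth-features} and the same closure property, this is $\alpha_\Phi$-smooth with norm controlled by $L_\Phi$, $d$, and $\lambda_n^{-1}$.

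Next, I would propagate the modified inequalities through the three main stages of RAFFLE's analysis: the MLE-based transition-error bound under exploratory data, the extension of this bound to error under arbitrary roll-in policies (now via our smooth-IS inequality rather than discrete IS), and the elliptical-potential argument controlling the cumulative bonus across rounds. Each inequality of the form $\text{err}(\pi)\leq|\Acal|\cdot\text{err}(\unif_\Acal)$ becomes $\text{err}(\pi)\leq c L^\tau\text{err}(\unif_\Acal)^{1-\tau}$. Because the exponent on the uniform-policy error drops from $1$ to $1-\tau$, driving cumulative error to $\epsilon$ produces a self-consistent inequality in $n$: per-sample error decays as $n^{-(1-\tau)}$ while $\hat\alpha_n$ must grow as $\tilde O(\sqrt{n^\tau(L^\tau+(dL)^\tau+d^2)})$ to preserve the optimism inequality underlying RAFFLE. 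Balancing these produces the characteristic exponent $1/(1-2\tau)$ in the sample complexity, and explains why we must insist $\tau<1/2$ for the bound to close.

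The main obstacle is careful bookkeeping of the two distinct smoothness constants $L_E$ and $L_\Phi$ together with the three qualitatively distinct sources of error that appear inside $\hat\alpha_n$. The $L^\tau$ term tracks the smoothness of the TV error; the $(Ld)^\tau$ term tracks the smoothness of the bonus, where the extra factor of $d$ accounts for the $d^2$ entries of the quadratic form; and the $d^2$ term is the purely dimensional contribution already present in the discrete-action analysis. After the exponent inversion induced by \pref{thm:is-alpha-smooth}, each of these terms must be balanced separately against $\hat\alpha_n$ and against $\epsilon^2$ in the final error accounting. Keeping $\lambda_n=\tilde O(d)$ as in \citet{cheng2023improved}, solving the balanced inequality for the per-round sample count $n$, and multiplying by the $H$-factor from the roll-out yields the announced total sample complexity.
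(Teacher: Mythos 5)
Your plan follows essentially the same route as the paper's proof: replace each discrete importance-sampling step in the RAFFLE analysis of \citet{cheng2023improved} with \pref{thm:is-alpha-smooth} after verifying $\alpha$-smoothness of the squared TV error and of the quadratic form built from the components of $\hat{\phi}_h^{(n)}$, then re-balance $\hat{\alpha}_n$ against the $n^{-(1-\tau)}$ error decay to obtain the $1/(1-2\tau)$ exponents and the $\tau<1/2$ requirement. One correction: the truncated bonus $\hat{b}_h^{(n)}(s,a)=\min\{\hat{\alpha}_n\|\hat{\phi}_{h}^{(n)}(s,a)\|_{(\hat{U}_{h}^{(n)})^{-1}},1\}$ is not itself $\alpha$-smooth and its square is not the quadratic form you wrote, so (as the paper does) you must first bound $\hat{b}_h^{(n)}\leq 3\hat{\alpha}_n\|\hat{\phi}_{h}^{(n)}\|_{(U_{h,\hat{\phi}}^{(n)})^{-1}}$ via their Lemma 1 and apply \pref{thm:is-alpha-smooth} only to the smooth function $\|\hat{\phi}_h^{(n)}(s,a)\|^2_{(U_{h,\hat{\phi}}^{(n)})^{-1}}$; you should also note that the $(dL)^\tau$ term arises not only from the bonus but also from the roll-in step at time $h-2$, where the conditional expectation of $f_{h-1}^{(n)}(\cdot,\cdot)^2$ must be shown to be $\alpha_\Phi$-smooth in $a_{h-2}$ with norm at most $dL_\Phi$ before \pref{thm:is-alpha-smooth} can be applied there.
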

We note that as $\tau\to 0$ (indicating that $\varepsilon$ and $\phi(s,a)$ have a very high order of smoothness relative to $m$) we achieve identical dependence on $H$, $d$, and $1/\epsilon$ as in \citet{cheng2023improved}. 

\begin{remark}
    Although this theorem is not applicable to $\varepsilon$ and $\phi(s,a)$ with a lower degree of smoothness ($\tau\geq 1/2$), it is worth noting that a more refined bound could potentially be achieved through an alternative proof strategy, instead of naively replacing the IS arguments in \citet{cheng2023improved}'s proof with our \pref{thm:is-alpha-smooth}. We leave this for future work. 
\end{remark}
 
\begin{proof}[\pfref{thm:raffle-guarantee}] Similarly to the FLAMBE analysis, we refer to \citet[Lemma $x$, Appendix $A$]{cheng2023improved} by CHYL-L$x$, for each $x$, and similarly CHYL-C$x$  and CHYL-P$x$ for corollaries and propositions, respectively. Whenever possible, we keep notation consistent with that in \citet{cheng2023improved}.

\paragraph{CHYL-L1} 
The statement and proof of this lemma remains unchanged, as its validity does not depend on $\Acal$ being discrete. 

\paragraph{CHYL-C1} Let $\alpha=\min(\alpha_E, \alpha_\Phi)$ and $L=\max(L_E, L_\Phi)$. Setting 
\[\hat{\alpha}_n=5\sqrt{\beta_3 n \zeta_n^{\frac{\alpha}{m+\alpha}}\left( L^{\frac{m}{m+\alpha}} + (d L)^{\frac{m}{m+\alpha}} + d^2\right)}\] 
in the definition of the bonus reward term $\hat{b}_h(s,a)$, the statement and result of this lemma stay the same. 

\paragraph{CHYL-L2} We add an additional condition to the statement of the lemma that $g^2$ is $\alpha$-smooth with norm $L$. 
Then, for all $h\geq 2$ and for all policies $\pi_h$, there exists some $c=O(1)$ such that the following inequality holds:
\begin{align*}
    \mathop{\EE}_{s_h \sim P_{h-1} \atop a_h \sim \pi_h}&[g(s_h,a_h)|s_{h-1},a_{h-1}]\\  &\leq  \left\|\phi_{h-1}(s_{h-1},a_{h-1})\right\|_{(M_{h-1,\phi})^{-1}} \times\nonumber\\
    &\quad 
    \sqrt{cnL^{\frac{m}{m+\alpha}}\mathop{\EE}_{s_{h}\sim(P^\star, \Pi)\atop a_h \sim \Ucal }[g^2(s_h,a_h)]^{\frac{\alpha}{m+\alpha}} + \lambda_n dB^2 + nB^2\mathop{\EE}_{s_{h-1}\sim(P^\star,\Pi)\atop a_{h-1}\sim\Pi }\left[f_{h-1}(s_{h-1},a_{h-1})^2\right]}.
\end{align*}
Here, $\Ucal=\unif_\Acal$. To obtain this result, we replace the importance sampling argument in step (iv) of the proof in \citep{cheng2023improved} with the result in our \pref{thm:is-alpha-smooth}. 

\paragraph{CHYL-L3} We now propagate the result of CHYL-L2 to obtain new bounds for Eq. (11-14) in  \citet[Appendix A]{cheng2023improved}. We first note that for any $h$, we have
\begin{align}
    \label{eq:mean-err}
    &\mathop{\EE}_{s_{h-1} \sim P^\star_{h-1}(\cdot \mid s_{h-2}, a_{h-2}) \atop a_{h-1} \sim \Pi_n(s_{h-1})} \left[ f_{h-1}^{(n)}(s_{h-1}, a_{h-1})^2 \ \Big| \ s_{h-2}, a_{h-2} \right] \\
    &= \int_{s_{h-1}} \int_{a_{h-1}} f^{(n)}(s_{h-1},a_{h-1})^2 \phi^\star_{h-2}(s_{h-2}, a_{h-2})^\top \mu^\star_{h-2}(s_{h-1}) \Pi_n(a_{h-1} \mid s_{h-1}) d a_{h-1} d s_{h-1}  \nonumber\\
    &= \sum_{i=1}^d \Omega_i \phi^\star_{h-2,i}(s_{h-2}, a_{h-2})  \,, \nonumber
\end{align}
where
\begin{equation*}
    \Omega_i = \int_{s_{h-1}} \int_{a_{h-1}} f^{(n)}(s_{h-1},a_{h-1})^2  \mu^\star_{h-2,i}(s_{h-1}) \Pi_n(a_{h-1} \mid s_{h-1}) d a_{h-1} d s_{h-1} \,.
\end{equation*}
Now, by the normalization assumption on $\mu^\star_{h-2}$, and the fact that $f^{(n)}$ is uniformly bounded by 1, it easily follows that $\|\Omega\|_2 \leq \sqrt{d}$, and therefore $\|\Omega\|_1 \leq d$. Therefore, given \pref{assum:alpha-smooth-features}, it easily follows that \pref{eq:mean-err} is uniformly $\alpha_\Phi$-smooth in $a_{h-2}$  with norm at most $d L_\Phi$. Then, applying \pref{thm:is-alpha-smooth} it follows that
\begin{align*}
    &\mathop{\EE}_{\substack{s_{h-1} \sim (P^\star, \Pi_n) \\ a_{h-1} \sim \Ucal}}\left[f^{(n)}_{h-1}(s_{h-1},a_{h-1})^2\right] \\
    &= \mathop{\EE}_{\substack{s_{h-2} \sim (P^\star, \Pi_n) \\ a_{h-2} \sim \Pi_n(s_{h-2})}}\left[ \mathop{\EE}_{s_{h-1} \sim P^\star_{h-1}(\cdot \mid s_{h-2}, a_{h-2}) \atop a_{h-1} \sim \Pi_n(s_{h-1})} \left[ f_{h-1}^{(n)}(s_{h-1}, a_{h-1})^2 \ \Big| \ s_{h-2}, a_{h-2} \right] \right] \\
    &\leq d^{\frac{m}{m+\alpha_\Phi}} L_\Phi^{\frac{m}{m+\alpha_\Phi}} \mathop{\EE}_{\substack{s_{h-2} \sim (P^\star, \Pi_n) \\ a_{h-2} \sim \Ucal}}\left[ \mathop{\EE}_{s_{h-1} \sim P^\star_{h-1}(\cdot \mid s_{h-2}, a_{h-2}) \atop a_{h-1} \sim \Pi_n(s_{h-1})} \left[ f_{h-1}^{(n)}(s_{h-1}, a_{h-1})^2 \ \Big| \ s_{h-2}, a_{h-2} \right] \right]^{\frac{\alpha_\Phi}{m+\alpha_\Phi}} \\
    &= d^{\frac{m}{m+\alpha_\Phi}} L_\Phi^{\frac{m}{m+\alpha_\Phi}} \mathop{\EE}_{\substack{s_{h-2} \sim (P^\star, \Pi_n) \\ a_{h-2}, a_{h-1} \sim \Ucal \\ s_{h-1} \sim P^\star_{h-1}(\cdot \mid s_{h-2}, a_{h-2})}}\left[f^{(n)}_{h-1}(s_{h-1},a_{h-1})^2\right]^{\frac{\alpha_\Phi}{m+\alpha_\Phi}}
\end{align*}

Given this observation, we can follow the steps as in the original proof of CHYL-L3, using the the modified CHYL-L2 above, as well replacing the importance sampling argument at time step $h-2$ with \pref{thm:is-alpha-smooth} using the above observation. Specifically, we have
\begin{align*}
      & \mathop{\EE}_{s_{h}\sim\hat{P}_{h-1}^{(n)}\atop a_{h}\sim \pi }\left[f_{h}^{(n)}(s_{h},a_{h})\bigg|s_{h-1},a_{h-1}\right] \\ 
    & \leq 
    \left\|\hat{\phi}_{h-1}^{(n)}(s_{h-1},a_{h-1})\right\|_{(U_{h-1,\hat{\phi}}^{(n)})^{-1}} \times \\
    & \qquad \sqrt{ cn L_E^{\frac{m}{m+\alpha_E}} \mathop{\EE}_{\substack{s_{h-1} \sim (P^\star, \Pi_n) \\ a_{h-1}, a_h \sim \Ucal \\ s_h \sim P^\star_h}} \Big[ f^{(n)}_h(s_h,a_h)^2 \Big]^{\frac{\alpha_E}{m+\alpha_E}}
    +\lambda_n d 
    + n (d L_\Phi)^{\frac{m}{m+\alpha_\Phi}} \mathop{\EE}_{\substack{s_{h-2} \sim (P^\star, \Pi_n) \\ a_{h-2}, a_{h-1} \sim \Ucal \\ s_{h-1} \sim P^\star_{h-1}}}\left[f^{(n)}_{h-1}(s_{h-1},a_{h-1})^2\right]^{\frac{\alpha_\Phi}{m+\alpha_\Phi}}} \tag{above bound and CHYL-L2} \\
    & \leq \left\|\hat{\phi}_{h-1}^{(n)}(s_{h-1},a_{h-1})\right\|_{(U_{h-1,\hat{\phi}}^{(n)})^{-1}} \sqrt{cn L_E^{\frac{m}{m+\alpha_E}}\zeta_n^{\frac{\alpha_E}{m+\alpha_E}} + cn (d L_\Phi)^{\frac{m}{m+\alpha_\Phi}}\zeta_n^{\frac{\alpha_\Phi}{m+\alpha_\Phi}} + \beta_3n\zeta_nd^2} \tag{CHYL-L1 and $\lambda_n=\beta_3nd\zeta_n$}\\
    & \leq \left\|\hat{\phi}_{h-1}^{(n)}(s_{h-1},a_{h-1})\right\|_{(U_{h-1,\hat{\phi}}^{(n)})^{-1}} \sqrt{\beta_3 n \left( L_E^{\frac{m}{m+\alpha_E}}\zeta_n^{\frac{\alpha_E}{m+\alpha_E}} + (d L_\Phi)^{\frac{m}{m+\alpha_\Phi}}\zeta_n^{\frac{\alpha_\Phi}{m+\alpha_\Phi}} + \zeta_nd^2\right)}\\
    & \leq \alpha_n \left\|\hat{\phi}_{h-1}^{(n)}(s_{h-1},a_{h-1})\right\|_{(U_{h-1,\hat{\phi}}^{(n)})^{-1}}
\end{align*}
where we set $\alpha_n = \hat{\alpha}_n/5 = \sqrt{\beta_3 n \zeta_n^{\frac{\alpha}{m+\alpha}}\left( L^{\frac{m}{m+\alpha}} + (d L)^{\frac{m}{m+\alpha}} + d^2\right)}$. Note that without loss of generality we can let the constant $c$ from \pref{thm:is-alpha-smooth} be absorbed into $\beta_3$ since they are both $O(1)$. 

Eq. (12) follows from similar arguments such that
\begin{align*}
    \mathop{\EE}_{s_{h}\sim P_{h-1}^{*}\atop a_{h\sim \pi}} \left[f_{h}^{(n)}(s_{h},a_{h})\bigg|s_{h-1},a_{h-1}\right]\leq \alpha_n\left\|\phi_{h-1}^{*}(s_{h-1},a_{h-1})\right\|_{(U_{h-1,\phi^\star}^{(n)})^{-1}}.
\end{align*}

For Eq. (13), we have $\hat{b}_h^{(n)}(s,a)=\min\left\{\hat{\alpha}_n\left\|\hat{\phi}_{h}^{(n)}(s,a)\right\|_{(\hat{U}_{h}^{(n)})^{-1}},1\right\}\leq 3\hat{\alpha}_n\left\|\hat{\phi}_{h}^{(n)}(s,a)\right\|_{(U_{h,\hat{\phi}}^{(n)})^{-1}}$ from CHYL-L1. We can then apply CHYL-L2 (before the IS argument):
\begin{align*}
    	& \mathop{\EE}_{s_{h}\sim P^\star_{h-1} \atop a_{h} \sim {\pi_n} }\left[\hat{b}^{(n)}_{h}(s_{h},a_{h})\bigg|s_{h-1},a_{h-1}\right] \leq \left\|\phi_{h-1}^\star(s_{h-1},a_{h-1})\right\|_{(W_{h-1,\phi^\star}^{(n)})^{-1}}
		\sqrt{n\mathop{\EE}_{s_{h}\sim (P^\star, \Pi_n)\atop a_h\sim \pi_n}[\{\hat{b}_h^{(n)}(s_h,a_h)\}^2]+\lambda_n d }.
\end{align*}
We note that $\hat{b}_h^{(n)}(s,a)$ is not $\alpha$-smooth and thus we cannot directly apply our \pref{thm:is-alpha-smooth} results here. Instead, we will have to bound this term by $\|\hat{\phi}_h^{(n)}(s,a)\|_{(U_{h,\hat{\phi}}^{(n)})^{-1}}$ which we now show is $\alpha_\Phi$-smooth with norm at most $L_\Phi$:
\begin{align*}
    \|\hat{\phi}_h^{(n)}(s,a)\|^2_{(U_{h,\hat{\phi}}^{(n)})^{-1}} & = \textup{tr}\left\{\hat{\phi}_h^{(n)}(s_h,a_h)\hat{\phi}_h^{(n)}(s_h,a_h)^T 
    (U_{h,\hat{\phi}}^{(n)})^{-1}\right\}\\
    & = \sum_{i=1}^d\sum_{j=1}^d \{(U_{h,\hat{\phi}}^{(n)})^{-1}\}_{ji}\hat{\phi}_{h,i}^{(n)}(s_h,a_h)\hat{\phi}_{h,j}^{(n)}(s_h,a_h)\\
    & \leq \frac{1}{\lambda_n}\sum_{i=1}^d\sum_{j=1}^d \hat{\phi}_{h,i}^{(n)}(s_h,a_h)\hat{\phi}_{h,j}^{(n)}(s_h,a_h) \tag{from the definition of $(U_{h,\hat{\phi}}^{(n)})^{-1}$}
\end{align*}
By \pref{assum:alpha-smooth-features}, we have that $\|\hat{\phi}_h^{(n)}(s,a)\|^2_{(U_{h,\hat{\phi}}^{(n)})^{-1}}$ is the sum of $d^2$ $\alpha_\Phi$-smooth functions with norm at most $L_\Phi/\lambda_n$. Since the set of $\alpha$-smooth function is closed under addition, it immediately follows that $\|\hat{\phi}_h^{(n)}(s,a)\|^2_{(U_{h,\hat{\phi}}^{(n)})^{-1}}$ is $\alpha_\Phi$-smooth with norm at most $d^2 L_\Phi/\lambda_n=dL_\Phi/(\beta_3\log(2|\Phi||\Psi|nH/\delta)) < dL_\Phi$. Then, we can control the $\hat{b}_h^{(n)}(s,a)$ expression as follows:
\begin{align*}
    n\mathop{\EE}_{s_{h}\sim (P^\star, \Pi_n)\atop a_h\sim \pi_n}[\{\hat{b}_h^{(n)}(s_h,a_h)\}^2]
    &\leq 9n\hat{\alpha}_n^2 \mathop{\EE}_{s_{h}\sim (P^\star, \Pi_n)\atop a_h\sim \pi_n}
    \left[\left\|\hat{\phi}_h^{(n)}(s_h,a_h)\right\|^2_{ (U_{h,\hat{\phi}}^{(n)})^{-1} }\right]\\
    & \leq 9n\hat{\alpha}_n^2 c(dL_\Phi)^{\frac{m}{m+\alpha_\Phi}} 
     \mathop{\EE}_{s_{h}\sim (P^\star, \Pi_n)\atop a_h\sim \Ucal}\left[\left\|\hat{\phi}_h^{(n)}(s_h,a_h)\right\|^2_{ (U_{h,\hat{\phi}}^{(n)})^{-1} }\right]^{\frac{\alpha_\Phi}{m+\alpha_\Phi}} \tag{from \pref{thm:is-alpha-smooth} on $\left\|\hat{\phi}_h^{(n)}(s_h,a_h)\right\|^2_{ (U_{h,\hat{\phi}}^{(n)})^{-1} }$}\\
      & \leq 9c\hat{\alpha}_n^2 (dL_\Phi)^{\frac{m}{m+\alpha_\Phi}} n^{\frac{m}{m+\alpha_\Phi}}
     \mathop{\EE}_{s_{h}\sim (P^\star, \Pi_n)\atop a_h\sim \Ucal}
     \left[n\left\|\hat{\phi}_h^{(n)}(s_h,a_h)\right\|^2_{ (U_{h,\hat{\phi}}^{(n)})^{-1} }\right]^{\frac{\alpha_\Phi}{m+\alpha_\Phi}} \\
     & \leq 9c\hat{\alpha}_n^2 (dL_\Phi)^{\frac{m}{m+\alpha_\Phi}} n^{\frac{m}{m+\alpha_\Phi}} d^{\frac{\alpha_\Phi}{m+\alpha_\Phi}} \\
     & = 9c\hat{\alpha}_n^2 d L_\Phi^{\frac{m}{m+\alpha_\Phi}} n^{\frac{m}{m+\alpha_\Phi}}
     \tag{same argument as the original CHYL-L3}
\end{align*}
Putting everything together, we obtain
\begin{align*}
    & \mathop{\EE}_{s_{h}\sim P^\star_{h-1} \atop a_{h} \sim {\pi_n} }\left[\hat{b}^{(n)}_{h}(s_{h},a_{h})\bigg|s_{h-1},a_{h-1}\right] \\
    & \leq \left\|\phi_{h-1}^\star(s_{h-1},a_{h-1})\right\|_{(W_{h-1,\phi^\star}^{(n)})^{-1}}
	\sqrt{9c\hat{\alpha}_n^2 d L_\Phi^{\frac{m}{m+\alpha_\Phi}} n^{\frac{m}{m+\alpha_\Phi}}+\lambda_n d }\\
 & \leq \left\|\phi_{h-1}^\star(s_{h-1},a_{h-1})\right\|_{(W_{h-1,\phi^\star}^{(n)})^{-1}}
    \sqrt{225  
 \beta_3 n^{1+\frac{m}{m+\alpha_\Phi}} \zeta_n^{\frac{\alpha}{m+\alpha}}L_\Phi^{\frac{m}{m+\alpha_\Phi}} d\left( L^{\frac{m}{m+\alpha}} + (d L)^{\frac{m}{m+\alpha}} + d^2\right)
   +\beta_3 n \zeta_n d^2 }\\
 & \leq \left\|\phi_{h-1}^\star(s_{h-1},a_{h-1})\right\|_{(W_{h-1,\phi^\star}^{(n)})^{-1}}
	\sqrt{ 450 
 \beta_3 n^{1+\frac{m}{m+\alpha}} \zeta_n^{\frac{\alpha}{m+\alpha}}L^{\frac{m}{m+\alpha}} d\left( L^{\frac{m}{m+\alpha}} + (d L)^{\frac{m}{m+\alpha}} + d^2\right)
   }\\
 & \leq \gamma_n \left\|\phi_{h-1}^\star(s_{h-1},a_{h-1})\right\|_{(W_{h-1,\phi^\star}^{(n)})^{-1}},
\end{align*}
where we set $\gamma_n = \sqrt{450  
 \beta_3 n^{1+\frac{m}{m+\alpha}} \zeta_n^{\frac{\alpha}{m+\alpha}}L^{\frac{m}{m+\alpha}} d\left( L^{\frac{m}{m+\alpha}} + (d L)^{\frac{m}{m+\alpha}} + d^2\right)
   }$. We note that this coefficient has a similar to the original $\gamma_n$, but with the first $K$ replaced by $L^{\frac{m}{m+\alpha}}$, the second $K$ replaced by $L^{\frac{m}{m+\alpha}}$ and different exponents for $\zeta_n$ and $n$. 

For $h=1$, we similarly have:
\begin{align*}
    \mathop{\EE}_{a_1\sim {\pi_n}}\left[f_1^{(n)}(s_1,a_1)\right] & \leq \sqrt{ c L_E^{\frac{m}{m+\alpha_E}} \mathop{\EE}_{a_1\sim \Ucal}\left[f_1^{(n)}(s_1,a_1)^2\right]^{\frac{\alpha_E}{m+\alpha_E}}} \leq \sqrt{c L_E^{\frac{m}{m+\alpha_E}}\zeta_n^{\frac{\alpha_E}{m+\alpha_E}} }\\
    & \leq \sqrt{c L^{\frac{m}{m+\alpha}}\zeta_n^{\frac{\alpha}{m+\alpha}} },\\
    \mathop{\EE}_{a_1\sim {\pi_n}}\left[\hat{b}(s_1,a_1)\right] & \leq 3\hat{\alpha}_n \sqrt{\mathop{\EE}_{a_1\sim\pi_n}
    \left[\left\|\hat{\phi}_1^{(n)}(s_1,a_1)\right\|^2_{ (U_{1,\hat{\phi}}^{(n)})^{-1} }\right]}\\
    & \leq 15\alpha_n \sqrt{cL_\Phi^{\frac{m}{m+\alpha_\Phi}} \mathop{\EE}_{a_1\sim\Ucal}
    \left[\left\|\hat{\phi}_1^{(n)}(s_1,a_1)\right\|^2_{ (U_{1,\hat{\phi}}^{(n)})^{-1} }\right]^{\frac{\alpha_\Phi}{m+\alpha_\Phi}}} \tag{\pref{thm:is-alpha-smooth}}\\
    & \leq 15\alpha_n \sqrt{c
    \frac{L_\Phi^{\frac{m}{m+\alpha_\Phi}}}{n^{\frac{\alpha_\Phi}{m+\alpha_\Phi}}}d^{\frac{\alpha_\Phi}{m+\alpha_\Phi}}}\\
    & \leq 15\alpha_n \sqrt{cd
    \frac{L_\Phi^{\frac{m}{m+\alpha_\Phi}}}{n^{\frac{\alpha_\Phi}{m+\alpha_\Phi}}}} \leq \leq 15\alpha_n \sqrt{cd
    \frac{L^{\frac{m}{m+\alpha}}}{n^{\frac{\alpha}{m+\alpha}}}}
\end{align*}
This concludes the updated CHYL-L3. We propagate these results through Propositions 4-6 of \citet{cheng2023improved}.

\paragraph{CHYL-P4} For all $n\in[N]$, policy $\pi$ and reward $r$, given that the event $\mathcal{E}$ occurs, we obtain the inequality:
\begin{align*}
    \left| V_{P^\star,r}^{\pi} - V_{\hat{P}^{(n)},r}^{\pi}\right| \leq   \hat{V}_{\hat{P}^{(n)},\hat{b}^{(n)}}^{\pi}+\sqrt{c L^{\frac{m}{m+\alpha}}\zeta_n^{\frac{\alpha}{m+\alpha}} }.
\end{align*}
The proof of this proposition only changes when bounding $ \mathop{\EE}_{a_1\sim {\pi_n}}\left[f_1^{(n)}(s_1,a_1)\right]$ which we instead bound using the updated CHYL-L3 for $h=1$. This concludes the proof. 

\paragraph{CHYL-P5} Replacing the IS arguments and the statements of CHYL-L3 and CHYL-P4 with the updated ones, we obtain:
\begin{align*}
    V_{P^\star, {\hat{b}}^{(n)}}^{{\pi}_n} & \leq \sum_{h=1}^{H} \mathop{\EE}_{s_{h}\sim (P^\star, {\pi_n}) \atop a_{h} \sim {\pi_n}} \left[\alpha_n\left\|\phi_{h}^\star(s_{h},a_{h})\right\|_{(U_{h,\phi^\star}^{(n)})^{-1}}
		\right]
		+\sqrt{c L^{\frac{m}{m+\alpha}}\zeta_n^{\frac{\alpha}{m+\alpha}} }, \\
  V_{P^\star, f^{(n)}}^{{\pi}_n} & \leq \sum_{h=1}^{H} \mathop{\EE}_{s_{h}\sim (P^\star, {\pi_n}) \atop a_{h} \sim {\pi_n}} \left[\alpha_n\left\|\phi_{h}^\star(s_{h},a_{h})\right\|_{(U_{h,\phi^\star}^{(n)})^{-1}}
		\right] + 15\alpha_n \sqrt{cd
    \frac{L^{\frac{m}{m+\alpha}}}{n^{\frac{\alpha}{m+\alpha}}}}.
\end{align*}
Taking the sum of these terms and noting that $\zeta=N\zeta_N$, we have:
\begin{align*}
    &\sum_{n\in[N]} V_{P^\star, f^{(n)} + \hat{b}^{(n)}}^{{\pi}_n} + \sqrt{c L^{\frac{m}{m+\alpha}}\zeta_n^{\frac{\alpha}{m+\alpha}} }\\
    & \qquad \leq 15\alpha_N N \sqrt{cd
    \frac{L^{\frac{m}{m+\alpha}}}{N^{\frac{\alpha}{m+\alpha}}}} 
    + 2 N \sqrt{c L^{\frac{m}{m+\alpha}}\zeta_N^{\frac{\alpha}{m+\alpha}} } + H\gamma_N \sqrt{Nd\zeta} + H\alpha_N \sqrt{cN L_\Phi^{\frac{m}{m+\alpha_\Phi}}(d\zeta)^{\frac{\alpha_\Phi}{m+\alpha_\Phi}}}\\
    & \qquad \leq 15\alpha_N \sqrt{cd L^{\frac{m}{m+\alpha}}N^{1+\frac{m}{m+\alpha}} } + \frac{2\alpha_N\sqrt{N}}{\sqrt{\beta_3(L^{\frac{m}{m+\alpha}}+(dL)^{\frac{m}{m+\alpha}}+d^2)}}\\
    & \qquad \quad + H\sqrt{Nd \zeta}\sqrt{ 450 
 \beta_3 N^{1+\frac{m}{m+\alpha}} \zeta_N^{\frac{\alpha}{m+\alpha}}L^{\frac{m}{m+\alpha}} d\left( L^{\frac{m}{m+\alpha}} + (d L)^{\frac{m}{m+\alpha}} + d^2\right)
   }\\
    & \qquad \quad + H
    \sqrt{\beta_3 N \zeta_N^{\frac{\alpha}{m+\alpha}}\left( L^{\frac{m}{m+\alpha}} + (d L)^{\frac{m}{m+\alpha}} + d^2\right)} \sqrt{cN L^{\frac{m}{m+\alpha}}(d\zeta)^{\frac{\alpha}{m+\alpha}}}\\
    & \qquad \leq 17 \zeta H d 
    \sqrt{\beta_3 N^{1+\frac{2m}{m+\alpha}}L^{\frac{m}{m+\alpha}}\left( L^{\frac{m}{m+\alpha}} + (d L)^{\frac{m}{m+\alpha}} + d^2\right)}\\
    & \qquad \quad + \zeta H d \sqrt{ 450 
 \beta_3 N^{1+\frac{2m}{m+\alpha}} L^{\frac{m}{m+\alpha}}\left( L^{\frac{m}{m+\alpha}} + (d L)^{\frac{m}{m+\alpha}} + d^2\right)
   } \\
    & \qquad \quad + \zeta H d \sqrt{\beta_3 N^{1+\frac{m}{m+\alpha}} L^{\frac{m}{m+\alpha}}\left( L^{\frac{m}{m+\alpha}} + (d L)^{\frac{m}{m+\alpha}} + d^2\right)}\\
    & \qquad \leq (17+\sqrt{450}+1) \zeta H d \sqrt{\beta_3 N^{1+\frac{2m}{m+\alpha}} L^{\frac{m}{m+\alpha}}\left( L^{\frac{m}{m+\alpha}} + (d L)^{\frac{m}{m+\alpha}} + d^2\right)}\\
    & \qquad \leq 64 \zeta H d \sqrt{\beta_3 N^{1+\frac{2m}{m+\alpha}} L^{\frac{m}{m+\alpha}}\left( L^{\frac{m}{m+\alpha}} + (d L)^{\frac{m}{m+\alpha}} + d^2\right)}
\end{align*}

Note that in first inequality above we apply \pref{thm:is-alpha-smooth} first, then we follow the steps as in the original proof of CHYL-L3, and after apply Cauchy Schwartz we use Jensen's inequality to move the $\alpha_E / (m+\alpha_E)$ power out the front, before applying their Lemma 10.

\paragraph{CHYL-P6} We can now leverage the result in CHYL-P6 to obtain the following expression for $N_\epsilon$:
\begin{align*}
    \frac{\epsilon N_\epsilon}{2} & = 64 \zeta H d \sqrt{\beta_3 N_\epsilon^{1+\frac{2m}{m+\alpha}} L^{\frac{m}{m+\alpha}}\left( L^{\frac{m}{m+\alpha}} + (d L)^{\frac{m}{m+\alpha}} + d^2\right)}\\
    \Rightarrow N_\epsilon & = \left(\frac{2^{14}\beta_3 H^2 d^2 L^{\frac{m}{m+\alpha}}\left( L^{\frac{m}{m+\alpha}} + (d L)^{\frac{m}{m+\alpha}} + d^2\right) \zeta^2}{\epsilon^2}\right)^{\frac{1}{1-\frac{2m}{m+\alpha}}}
\end{align*}
Thus, letting $\tau = m/(m+\alpha)<1/2$, the sample complexity of the algorithm is given by $HN_\epsilon$ which can be written as:
\begin{align*}
    \tilde{O}\left(H^{1+\textstyle\frac{2}{1-2\tau}}d^{\textstyle\frac{2}{1-2\tau}}\left(\frac{1}{\epsilon}\right)^{\textstyle\frac{2}{1-2\tau}}L^{\textstyle\frac{\tau}{1-2\tau}}(L^\tau + (Ld)^\tau+d^2)^{\textstyle\frac{1}{1-2\tau}}\right).
\end{align*}
We note that as in the case of FLAMBE, when $\alpha_E, \alpha_\Phi\rightarrow\infty$, \emph{i.e.} $\tau\rightarrow 0$, we recover the sample complexity of RAFFLE with a discrete action space and $K=1$. 
\end{proof}

\end{document}